\documentclass[journal]{IEEEtran}
\usepackage{cite}

\usepackage[linesnumbered, ruled]{algorithm2e}
\usepackage[breaklinks]{hyperref}
\usepackage{xspace} 
\usepackage{mathrsfs}
\usepackage{amsmath,amsthm,amssymb,amsfonts}
\usepackage{booktabs}
\usepackage{makecell}
\usepackage{array}
\usepackage{bm}
\usepackage[caption=false,font=normalsize,labelfont=sf,textfont=sf]{subfig}
\usepackage{textcomp}
\usepackage{stfloats}
\usepackage{url}
\usepackage{verbatim}
\usepackage{graphicx}
\usepackage{cite}
\usepackage{array}
\usepackage{tabularx}
\usepackage{multirow}
\usepackage{bbding}
\usepackage{pifont}
\usepackage{wasysym}
\usepackage{color}
\usepackage{caption}
\usepackage{subcaption}
\usepackage{multicol}
\hypersetup{hidelinks,
	colorlinks=true,
	allcolors=black,
	pdfstartview=Fit,
	breaklinks=true}
\usepackage{amsmath}
\usepackage{amssymb}
\DeclareMathAlphabet\mathbfcal{OMS}{cmsy}{b}{n}
\usepackage{booktabs} 
\usepackage{multirow}
\usepackage{pifont}

\newtheorem{theorem}{Theorem}
\newtheorem{lemma}{Lemma}

\theoremstyle{plain}
\newtheorem{definition}{Definition}

\definecolor{mygray}{gray}{.91}
\usepackage[table]{xcolor}
\begin{document}
%
% paper title
% Titles are generally capitalized except for words such as a, an, and, as,
% at, but, by, for, in, nor, of, on, or, the, to and up, which are usually
% not capitalized unless they are the first or last word of the title.
% Linebreaks \\ can be used within to get better formatting as desired.
% Do not put math or special symbols in the title.
\title{SWIFT: A Small-World Interaction Framework for Flow-Aware Trajectory Prediction in Autonomous Driving}

\author{Chengyue~Wang,
        Bin~Rao,
        Haicheng~Liao,
        Bonan~Wang,
        Chengzhong~Xu, ~\IEEEmembership{Fellow, ~IEEE}
        and Zhenning Li$^{\dag}$
        % <-this % stops a space
\thanks{\dag\,Corresponding author. E-mails: zhenningli@um.edu.mo} \thanks{Chengyue Wang, Bin Rao, Haicheng Liao, Bonan Wang, Chengzhong Xu, and Zhenning Li are with the State Key Laboratory of Internet of Things for Smart City, University of Macau, Macau. Chengyue Wang and Zhenning Li are also affiliated with the Department of Civil and Environmental Engineering, University of Macau, Macau. }% <-this % stops a space
\thanks{This work was supported by the Science and Technology Development Fund of Macau [0122/2024/RIB2, 0215/2024/AGJ, 001/2024/SKL], the Research Services and Knowledge Transfer Office, University of Macau [SRG2023-00037-IOTSC, MYRG-GRG2024-00284-IOTSC], the Shenzhen-Hong Kong-Macau Science and Technology Program Category C [SGDX20230821095159012], the Science and Technology Planning Project of Guangdong [2025A0505010016], National Natural Science Foundation of China [52572354], the State Key Lab of Intelligent Transportation System [2024-B001], and the Jiangsu Provincial Science and Technology Program [BZ2024055].}}
% <-this % stops a space

%\thanks{Manuscript received April 19, 2005; revised August 26, 2015.}}

% note the % following the last \IEEEmembership and also \thanks - 
% these prevent an unwanted space from occurring between the last author name
% and the end of the author line. i.e., if you had this:
% 
% \author{....lastname \thanks{...} \thanks{...} }
%                     ^------------^------------^----Do not want these spaces!
%
% a space would be appended to the last name and could cause every name on that
% line to be shifted left slightly. This is one of those "LaTeX things". For
% instance, "\textbf{A} \textbf{B}" will typeset as "A B" not "AB". To get
% "AB" then you have to do: "\textbf{A}\textbf{B}"
% \thanks is no different in this regard, so shield the last } of each \thanks
% that ends a line with a % and do not let a space in before the next \thanks.
% Spaces after \IEEEmembership other than the last one are OK (and needed) as
% you are supposed to have spaces between the names. For what it is worth,
% this is a minor point as most people would not even notice if the said evil
% space somehow managed to creep in.

% The paper headers IEEE Transactions on Intelligent Transportation Systems
\markboth{Journal of IEEE Transactions on Pattern Analysis and Machine Intelligence, January~2026}%
{Wang \MakeLowercase{\textit{et al.}}: Bare Demo of IEEEtran.cls for IEEE Journals}
% The only time the second header will appear is for the odd numbered pages
% after the title page when using the twoside option.
% 
% *** Note that you probably will NOT want to include the author's ***
% *** name in the headers of peer review papers.                   ***
% You can use \ifCLASSOPTIONpeerreview for conditional compilation here if
% you desire.

% If you want to put a publisher's ID mark on the page you can do it like
% this:
%\IEEEpubid{0000--0000/00\$00.00~\copyright~2015 IEEE}
% Remember, if you use this you must call \IEEEpubidadjcol in the second
% column for its text to clear the IEEEpubid mark.

% use for special paper notices
%\IEEEspecialpapernotice{(Invited Paper)}

% make the title area
\maketitle

% As a general rule, do not put math, special symbols or citations
% in the abstract or keywords.
\begin{abstract}
Accurate trajectory prediction in autonomous driving hinges on modeling dynamic and context-dependent interactions among traffic agents. However, most existing approaches are purely data-driven and lack structural priors, which limits their generalization under distribution shifts. In this work, interaction modeling is revisited through the structure and dynamics of traffic networks, and SWIFT (Small-World Interaction Framework for Trajectory prediction) is proposed as a unified framework that integrates small-world networks with traffic flow theory. SWIFT introduces structural inductive biases via a Small-World Interaction Network that captures both local and global dependencies, and a Flow Regime Encoder that adapts the interaction structure to scene-level traffic states. Interaction reasoning is further enhanced through a multi-relational graph module that explicitly encodes direct and higher-order agent relationships. Extensive experiments on three real-world datasets, nuScenes, MoCAD, and NGSIM, show that SWIFT consistently outperforms strong baselines in prediction accuracy across diverse traffic regimes. Beyond accuracy, SWIFT exhibits improved generalization to unseen locations and regimes, robustness under noisy observations, and strong performance with limited training data, supporting the effectiveness of its structure-aware design.
\end{abstract}

% Note that keywords are not normally used for peerreview papers.
\begin{IEEEkeywords}
Autonomous Driving, Trajectory Prediction, Interaction Modeling, Small-world Networks
\end{IEEEkeywords}

% For peer review papers, you can put extra information on the cover
% page as needed:
% \ifCLASSOPTIONpeerreview
% \begin{center} \bfseries EDICS Category: 3-BBND \end{center}
% \fi
%
% For peerreview papers, this IEEEtran command inserts a page break and
% creates the second title. It will be ignored for other modes.
\IEEEpeerreviewmaketitle

\section{Introduction}
% The very first letter is a 2 line initial drop letter followed
% by the rest of the first word in caps.
% 
% form to use if the first word consists of a single letter:
% \IEEEPARstart{A}{demo} file is ....
% 
% form to use if you need the single drop letter followed by
% normal text (unknown if ever used by the IEEE):
% \IEEEPARstart{A}{}demo file is ....
% 
% Some journals put the first two words in caps:
% \IEEEPARstart{T}{his demo} file is ....
% 
% Here we have the typical use of a "T" for an initial drop letter
% and "HIS" in caps to complete the first word.

\IEEEPARstart{T}{rajectory} prediction is a foundational task in autonomous driving, serving as a critical input for downstream modules such as risk assessment, planning, and control~\cite{marchetti2024smemo}. It involves forecasting the future motion of surrounding traffic agents, including vehicles, pedestrians, and cyclists, based on their historical trajectories and environmental context. Recent advances in deep learning have markedly improved prediction accuracy by learning data-driven representations, as shown in Fig.~\ref{toutu}(a), of interaction patterns between traffic agents~\cite{sun2023modality, zhang2024decoupling}. Despite these advancements, fundamental challenges remain that limit the generalizability, robustness, and interpretability of current models, especially in complex traffic conditions~\cite{liu2025end,liao2025sa}.

\begin{figure}[t]
        \centering
	\includegraphics[width=\linewidth]{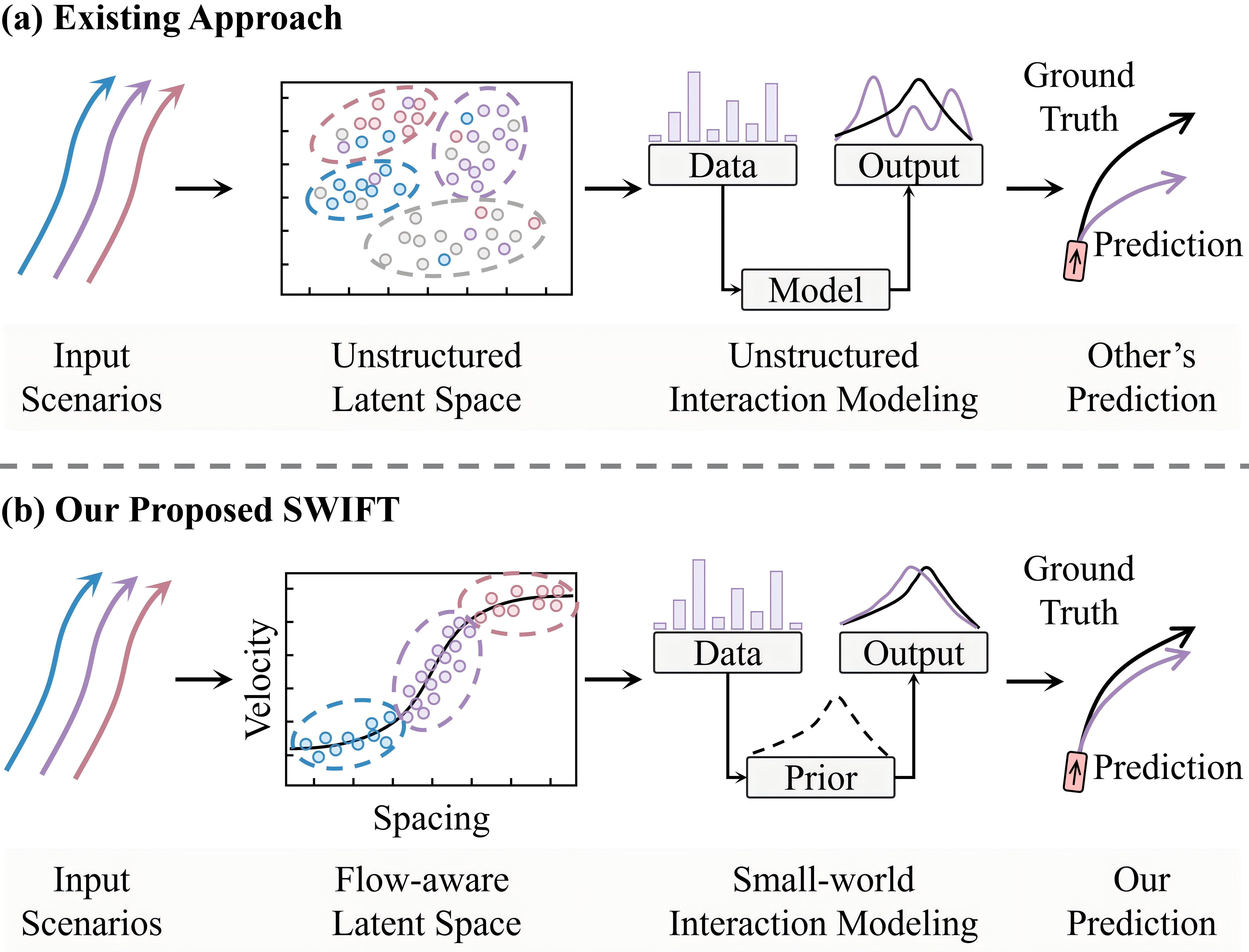}
	\caption{Conceptual comparison between (a) existing approach and (b) our proposed SWIFT. While existing approaches learn unstructured latent spaces and interactions purely from data, SWIFT introduces flow-aware latent spaces inspired by traffic flow theory and models agent interactions via a Small-World Interaction Network guided by structural priors.}
        \label{toutu}
\end{figure}

A key limitation lies in how inter-agent \textbf{interactions are modeled structurally}. Many state-of-the-art methods construct interaction graphs using proximity thresholds, predefined spatial grids, or self-attention mechanisms~\cite{du2024social,mo2024heterogeneous}. While these approaches are effective in localized, low-speed scenarios, they often fail to capture the global dependencies that naturally arise in traffic systems. For instance, on highways or in dense urban environments, a braking maneuver by a vehicle at one end of the scene may induce coordinated reactions several vehicles away. These effects, while spatially distant, are temporally and causally linked. Ignoring such global dependencies leads to fragmented reasoning and delayed hazard anticipation. Crucially, traffic systems often exhibit \textbf{small-world properties}~\cite{ghavasieh2024diversity}, which are characterized by high local clustering and short average path lengths, thereby facilitating rapid propagation of influence. However, these structural priors remain largely absent in existing data-driven interaction modeling of the trajectory prediction model.

A second limitation is that existing models typically \textbf{treat interactions as static}, or uniform across contexts, disregarding the scene-level traffic regime. However, traffic flow theory has long emphasized that agent behaviors, along with their interactions, are highly sensitive to macroscopic flow states~\cite{kerner2009introduction}. In free flow regimes, agents are loosely coupled, and small perturbations dissipate. In contrast, synchronized flow regimes are dominated by high-density coupling and reduced stability, where minor disruptions may amplify across the scene~\cite{kerner2004three}. As illustrated in Fig.~\ref{diagram}, different regimes exhibit markedly distinct patterns of interaction and system response. Models~\cite{liu2024attention, chen2022intention} that rely solely on raw kinematics or local geometry cannot differentiate among these regimes, and thus struggle to adapt their interaction structures to varying traffic dynamics.

Third, many models adopt pairwise interaction structures; they fail to account for \textbf{higher-order relational dynamics}. These occur when agents are not merely influenced by or influencing one other agent directly but are jointly affected by shared upstream context or coordinate actions as part of emergent group behavior~\cite{helbing2012social}. Such interactions are not simply the sum of individual pairs; they require reasoning about motifs, including direct, co-influencing, and co-influenced group structures, which are prevalent in complex scenes such as intersections, roundabouts, and merging zones~\cite{chen2025dstigcn, westny2023mtp}.

\begin{figure}[t]
        \centering
	\includegraphics[width=1\linewidth]{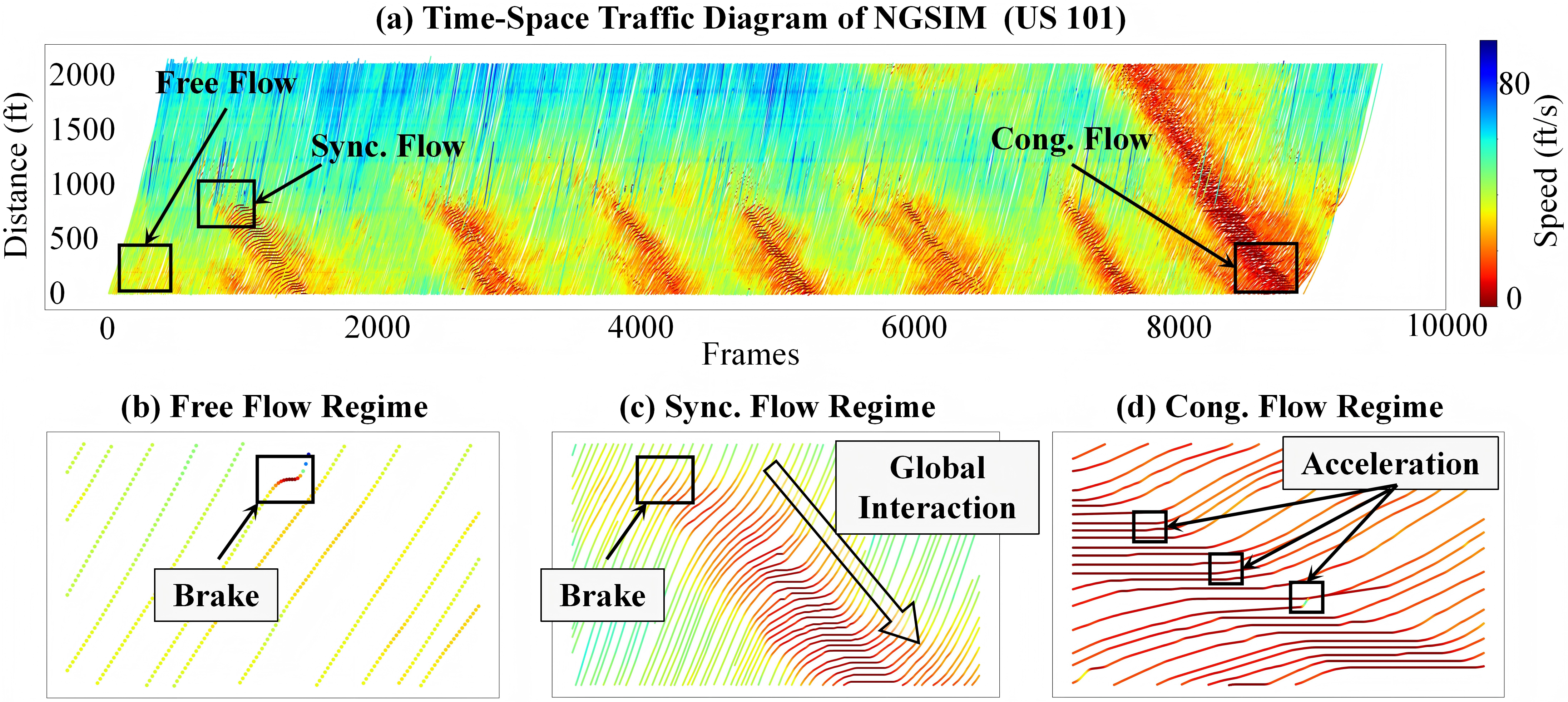}
	\caption{Spatial-temporal visualization and regime-specific decomposition of the NGSIM (US 101 7:50-8:05 AM) dataset~\cite{simulation2007us}. (a) Time-Space traffic diagram colored by speed. (b–d) Representative scenario examples sampled from each regime: (b) In the free flow regime, interactions are sparse, with brake behavior that does not alter the overall traffic dynamics. (c) In the synchronized flow regime, a single brake behavior will propagate globally, leading to large-scale traffic transitions. (d) In the congested flow regime, acceleration behavior typically has only localized effects. These examples illustrate the distinct interaction patterns within each regime.}
    \label{diagram}
\end{figure}

To address these interrelated challenges, we propose \textbf{SWIFT}, \textbf{S}mall-\textbf{W}orld \textbf{I}nteraction \textbf{F}ramework for \textbf{T}rajectory prediction, a unified model that integrates structural priors from network science with macroscopic insights from traffic flow theory, as shown in Fig.~\ref{toutu}(b). SWIFT rethinks interaction modeling as both structurally informed and context-aware: it constructs interaction graphs with small-world properties to capture both localized influence and global dependencies and adapts its modeling strategy based on traffic regime conditions (e.g., free-flow vs. congestion). By incorporating flow-aware representations and structured relational reasoning, SWIFT offers a robust, generalizable, and theoretically grounded alternative to existing data-driven paradigms. Extensive evaluations on diverse datasets, including nuScenes, MoCAD, and NGSIM, demonstrate SWIFT’s superior predictive accuracy, robustness under noise, and strong generalization, even with limited data.
Our contributions are summarized as follows:

\begin{itemize} 

\item \textbf{Small-World Interaction Network.} We design a hybrid interaction graph construction framework that enforces small-world properties, through a hybrid optimization-and-learning approach that jointly integrates explicit structural priors with data-driven refinement. This graph structure enables effective modeling of both local interactions and global interactions.

\item \textbf{Flow Regime Encoder.} We introduce a dynamic encoding mechanism that estimates macroscopic traffic statistics, such as agent density and velocity variance, and adjusts interaction graph parameters accordingly. This allows the model to adapt its connectivity and reasoning strategy to match the underlying traffic regime.

\item \textbf{Multi-Relational Interaction Reasoning.} We develop a multi-channel graph reasoning module that disentangles direct, co-influenced, and co-influencing agent relationships. This enables SWIFT to model complex group behaviors and higher-order dependencies that are often overlooked in pairwise interaction structures.

\end{itemize}

The remainder of the paper is organized as follows. Section~\ref{RelatedWork} reviews related literature in trajectory prediction, interaction modeling, and network structure in traffic systems. Section~\ref{Method} describes the SWIFT architecture and its theoretical foundations. Section~\ref{Experiment} presents extensive experimental results and ablation studies. Finally, Section~\ref{Conclusion} discusses implications and potential extensions.

\begin{figure*}[t]
        \centering
	\includegraphics[width=\textwidth]{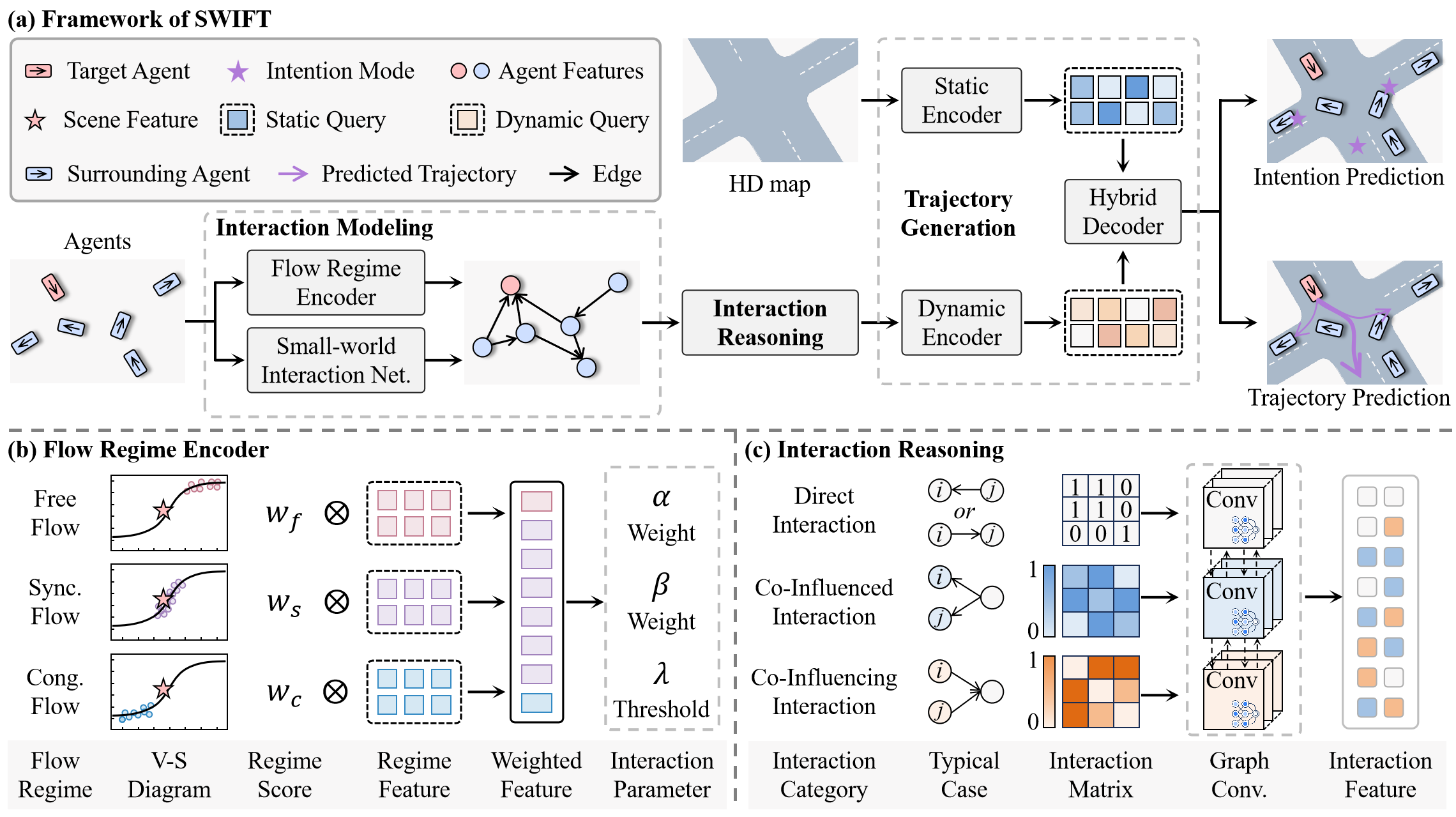}
	\caption{Overview of the proposed SWIFT model. Panel (a) presents the overall workflow, which takes agents' historical trajectories and a high-definition (HD) map as input, and predicts the target agent’s intent and future trajectories. SWIFT processes these inputs through three key stages: Interaction Modeling, Interaction Reasoning, and Trajectory Generation. Panels (b) and (c) highlight the internal designs of the Flow Regime Encoder and the Interaction Reasoning module, respectively.}
        \label{kuangjiatu}
\end{figure*}

\section{Related Work}\label{RelatedWork}
\subsection{Interaction Modeling in Traffic Scenes}
The application of deep learning to trajectory prediction in autonomous driving has evolved significantly, with early efforts treating it as a time-series forecasting problem~\cite{lan2024hi, liao2024physics}. A turning point came with Social LSTM~\cite{alahi2016social}, which introduced the concept of social forces, shifting the focus toward modeling inter-agent interactions~\cite{liao2024cognitive, deo2022multimodal, chen2024q,liao2025toward}. Building on this, grid-based methods and convolutional neural networks (CNNs) emerged to aggregate local information by discretizing scenes~\cite{chen2022intention}, yet their reliance on frame-by-frame processing neglects temporal evolution and global dependencies, compromising both accuracy and efficiency. To address these limitations, attention mechanisms like those in Nettraj~\cite{liang2021nettraj} incorporated spatiotemporal weighting to enhance interaction modeling, but their dependence on grids and quadratic computational complexity limits scalability in dense traffic scenarios. Subsequently, Graph Neural Networks (GNNs), such as SFEM-GCN~\cite{du2024social}, advanced the field by encoding agents as graph nodes with semantic and positional features to handle unstructured environments. However, these approaches primarily focus on modeling local interactions among nearby agents, while overlooking the ubiquitous yet critical global interactions. This highlights the urgent need for an interaction modeling framework to capture and integrate local and global interactions among agents efficiently.

\subsection{Small-world Network}
Small-world networks, as characterized by Watts and Strogatz~\cite{watts1998collective}, exhibit high clustering coefficients and short average path lengths, thereby facilitating efficient information dissemination through a combination of localized connectivity and global accessibility. Owing to these properties, small-world structures have been widely adopted in diverse domains, including social networks~\cite{lee2024cultural}, navigation systems~\cite{kleinberg2000navigation}, and power grids~\cite{zhao2024electric}. Ghavasieh et al.~\cite{ghavasieh2024diversity} further uncovered the underlying mechanisms of the small-world phenomenon through a comprehensive theoretical analysis of 543 real-world networks, including traffic systems. In the context of traffic networks, this phenomenon manifests when local perturbations, such as a single vehicle’s sudden braking, rapidly propagate upstream through inter-agent interactions, potentially resulting in traffic congestion~\cite{wu2011shockwave}. While small-world networks exhibit strong compatibility with interaction characteristics in traffic scenarios~\cite{wang2025nest}, the fundamental mechanisms linking small-world topology to traffic dynamics remain largely underexplored. In this work, we aim to close this gap by developing an interaction modeling framework that explicitly incorporates insights from traffic-flow theory and small-world networks, thereby providing a more theoretically consistent foundation for trajectory prediction in traffic scenarios.

\section{Methodology}\label{Method}
\subsection{Problem Formulation}
Given a traffic scenario involving $N$ traffic agents, we designate the agent of interest as the \textit{target agent}, and the remaining agents are referred to as \textit{surrounding agents}. The objective of the proposed SWIFT is to predict the target agent's future trajectory, denoted as $\hat{Y}$, over the prediction horizon $[t_{\mathrm{h}+1}:t_\mathrm{f}]$. 
The SWIFT takes as input the observations of the target agent $X^{t_{1}:t_\mathrm{h}}_{1}$ and surrounding agents $X^{t_{1}:t_\mathrm{h}}_{2:N}$ over the historical horizon $[t_{1}:t_\mathrm{h}]$, along with a high-definition (HD) map $M$. Specifically, the historical observations consist of sequential data such as agent coordinates and velocities, while the HD map encodes lane geometries.

To capture the inherent uncertainty of driving behavior, we adopt a multimodal prediction framework that has been widely recognized in the literature for modeling behavioral uncertainty~\cite{chen2022intention,liao2024bat,zhou2022hivt}. Specifically, SWIFT predicts $K$ candidate trajectories $\{\hat{Y}_k\}_{k=1}^K$, where each trajectory mode $k$ is represented as:
\begin{equation}
\hat{Y}_k^{t}
= \big(\hat{x}_k^{t}, \hat{y}_k^{t}, \mu_{k}^{\mathrm{x},t}, \mu_{k}^{\mathrm{y},t},p_k\big),
\end{equation}
where $\hat{x}_k^{t}$ and $\hat{y}_k^{t}$ denote the predicted coordinates, $\mu_{k}^{\mathrm{x},t}$ and $\mu_{k}^{\mathrm{y},t}$ are scale parameters that quantify per-axis uncertainty at time $t$, and $p_k$ is the probability of the $k$-th mode.

\subsection{Overall Framework}
Fig.~\ref{kuangjiatu} illustrates the pipeline of the proposed SWIFT, which comprises three main stages. In the interaction modeling stage, SWIFT constructs an interaction graph $G$ using a Small-world Interaction Network to represent inter-agent interactions. To adapt the interaction graph $G$ to varying traffic flow dynamics, we introduce a Flow Regime Encoder, which dynamically modulates the interaction parameters. Conditioning the constructed graph $G$, SWIFT proceeds to the interaction reasoning stage, where agent features are aggregated to infer interaction feature $\mathcal{F}_\mathrm{i}$. Finally, in the trajectory generation stage, SWIFT generates multi-modal future trajectories for the target agent.

\begin{figure}[t]
        \centering
	\includegraphics[width=0.86\linewidth]{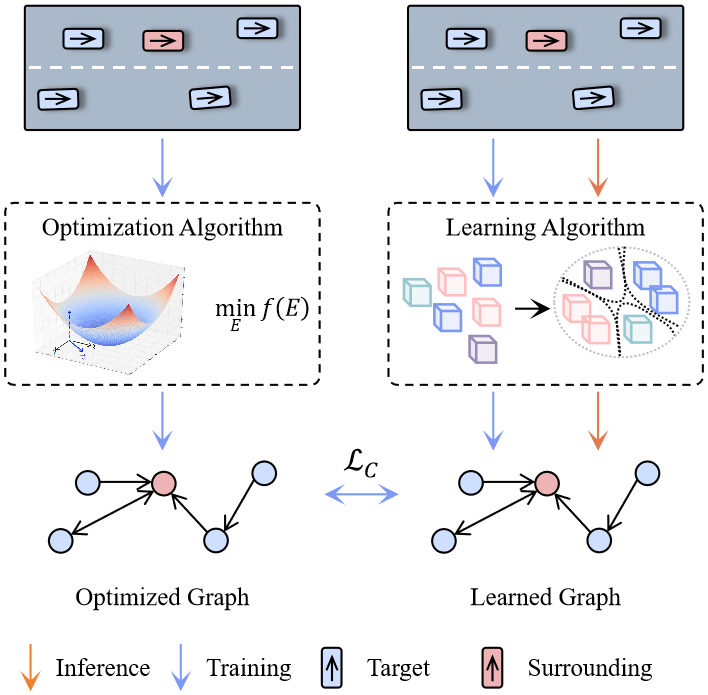}
	\caption{Illustration of the training process for Small-world Interaction Network. The optimization algorithm constructs an optimal interaction graph, which serves as the ground truth to supervise the training of the learning algorithm.}
        \label{optimal}
\end{figure}

\subsection{Small-world Interaction Network}
In contrast to existing approaches that rely primarily on spatial proximity and therefore capture only local interactions, we aim to construct an interaction graph that accounts for both local and global dependencies while supporting efficient information propagation among agents. These objectives are naturally aligned with the properties of small-world networks, which offer high local clustering and short global path lengths.
To this end, we propose a directed interaction graph $G=[V, E]$, where each vertex $v_i \in V$ represents an agent, and each directed edge $e_{ij} \in \{0,1\} \subset E$ indicates whether agent $i$ is influenced by agent $j$. The properties of the interaction graph are governed by the structural arrangement of its edge topology. Therefore, we adopt a hybrid modeling approach that focuses on the edge set $E$. Specifically, Structural Graph Optimization constructs an edge set $E_\mathrm{op}$ to explicitly enforce small-world properties, which supervises the training of Adaptive Graph Learning. This hybrid design, illustrated in Fig.~\ref{optimal}, ensures that the learned edge set $E_\mathrm{le}$ inherits small-world properties without incurring inference-time optimization overhead.

\renewcommand{\algorithmcfname}{Algorithm}
\begin{algorithm}[tbp]
\caption{Optimization-based Modeling}
\label{alg:SimulatedAnnealingSWG}
\KwIn{Target agent $v_i$, agent set $V$, distance function $\phi(\cdot,\cdot)$, local weights $\alpha$, global weights $\beta$, initial temperature $T^0$, cooling rate $\eta$, maximum iterations $H$}
\KwOut{Optimized connected agents set $\mathcal{N}^*(v_i)$}
\BlankLine
Initialize a connected agents set $\mathcal{N}(v_i) \subset V \setminus \{v_i\}$\\
Compute initial cost $\mathcal{J}(v_i)$ using Equation~\eqref{costfunction}\\
Set $T \gets T^0$\\
\For{$k \gets 1$ \textbf{to} $H$}{
    Generate a solution $\mathcal{N}'(v_i)$ by randomly adding/removing/swapping an agent in $\mathcal{N}(v_i)$\\
    Compute cost $\mathcal{J}'(v_i)$ of the new set $\mathcal{N}'(v_i)$\\
    $\Delta \gets \mathcal{J}'(v_i) - \mathcal{J}(v_i)$\\
    \uIf{$\Delta < 0$}{
        Accept new solution: $\mathcal{N}(v_i) \gets \mathcal{N}'(v_i)$; update $\mathcal{J}(v_i) \gets \mathcal{J}'(v_i)$
    }
    \Else{
        Accept with probability $p \gets \exp(-\Delta / T)$
    }
    Update temperature: $T \gets \eta \cdot T$
}
\Return Final connected agents set $\mathcal{N}^*(v_i) \gets \mathcal{N}(v_i)$
\end{algorithm}

\subsubsection{Structural Graph Optimization}
The edge optimization process operates in an agent-centric manner, where each agent $i$ independently determines its optimal connected agents set $\mathcal{N}^*(v_i)$. The optimized edge set $E_\mathrm{op}$ is subsequently constructed with following function:
\begin{equation}
e_{ij} =
\begin{cases}
1, & \text{if } v_j \in \mathcal{N}^*(v_i)  \quad \text{or} \quad i=j \\
0, & \text{otherwise}.
\end{cases}
\end{equation}

Inspired by the formation mechanisms of small-world networks~\cite{alamdari2023small}, we formulate this connected agents selection as an optimization problem minimizing the information acquisition cost $\mathcal{J}(v_i)$. For each agent $v_i$, the optimal connected agents set $\mathcal{N}^*(v_i)$ is the solution to the following optimization:
{
\begin{equation}
\label{costfunction}
\begin{aligned}
\mathcal{N}^*&=\operatorname*{argmin}_{\mathcal{N}}\mathcal{J}(v_i) \\
&= \underbrace{\alpha\sum_{v_j \in \mathcal{N}(v_i)}\phi^\varphi(v_i,v_j)}_{\text{connection cost}} + \underbrace{\beta\sum_{v_w \notin \mathcal{N}(v_i)} \min_{v_j\in \mathcal{N}(v_i)} \phi(v_w, v_j)}_{\text{isolation cost}}. 
\end{aligned}
\end{equation}

Here, $\phi(\cdot,\cdot)$ denotes the distance between agents. The first term, connection cost, penalizes excessive direct connections, enforcing local clustering efficiency. The second term, isolation cost, ensures global connectivity by penalizing the isolation of non-connected agents through their minimum distance to the connected set. The local weights $\alpha$ and global weights $\beta$ govern the trade-off between local clustering and global connectivity. The exponent $\varphi$ governs the distance penalty; its selection is discussed in Section~\ref{connectioncost}.}

To efficiently approximate the optimal connected agent set $\mathcal{N}^*(v_i)$, we employ a Simulated Annealing (SA) strategy. This method explores the discrete and non-convex search space by iteratively perturbing the currently connected agent set through localized modifications. The objective function $\mathcal{J}(v_i)$ serves as the energy function, jointly capturing the trade-off between local clustering and global connectivity. The optimization process is detailed in Algorithm \ref{alg:SimulatedAnnealingSWG}. Furthermore, we provide a theoretical analysis that establishes finite-iteration performance guarantees for the proposed optimization scheme.

\begin{theorem}[Finite-iteration Performance Guarantee of Optimization-based Modeling]
\label{thm:sa-bound}
Let $\mathcal{S}$ be the discrete solution space of all possible connected agent sets $\mathcal{N}(v_i)$ for agent $v_i$, and we run Simulated Annealing for $k$ iterations with a temperature schedule $T^k =\eta^k T^0$ with $\eta \in (0,1)$. Let $\mathcal{N}_\mathrm{SA}^{k}$ be the solution with stationary distribution at the $k$-th iteration. Then, we have the High-Probability Bound:
    \begin{equation}\label{eq:high-prob-bound}
      \mathbb{P}\left(\mathcal{J}(\mathcal{N}_\mathrm{SA}^{k}) - \mathcal{J}^* > \epsilon\right) \leq |\mathcal{S}_\epsilon|\exp\left(-\tfrac{\epsilon}{T^k}\right),
    \end{equation}
    where $\mathcal{S}_\epsilon = \{\mathcal{N} \in \mathcal{S} : \mathcal{J}(\mathcal{N}) > \mathcal{J}^* + \epsilon\}$ and $\epsilon > 0$.
\end{theorem}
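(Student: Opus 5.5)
The statement says $\mathcal{N}_\mathrm{SA}^{k}$ is distributed according to the \emph{stationary distribution} of the Metropolis chain at temperature $T^k$. I will therefore read it as a statement about the Gibbs (Boltzmann) measure
\begin{equation*}
\pi_{T}(\mathcal{N}) = \frac{\exp\left(-\mathcal{J}(\mathcal{N})/T\right)}{Z_T}, \qquad Z_T=\sum_{\mathcal{N}'\in\mathcal{S}}\exp\left(-\mathcal{J}(\mathcal{N}')/T\right),
\end{equation*}
evaluated at $T=T^k=\eta^k T^0$. The argument will not depend on mixing-time estimates.

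\textbf{Step 1: the Gibbs measure is stationary.} The proposal in Algorithm~\ref{alg:SimulatedAnnealingSWG} is random add/remove/swap of one agent. Its proposal kernel $q$ is symmetric, provided the choice is uniform over admissible moves; if it is not, I would assume a Hastings correction. The acceptance probability is $\min\{1,\exp(-\Delta/T)\}$. Detailed balance $\pi_T(\mathcal{N})P(\mathcal{N},\mathcal{N}')=\pi_T(\mathcal{N}')P(\mathcal{N}',\mathcal{N})$ then follows by checking the two cases $\Delta<0$ and $\Delta\ge 0$. Hence $\pi_T$ is stationary. Since the add/remove moves connect all subsets of the finite set $V\setminus\{v_i\}$, the chain is irreducible, so $\pi_T$ is the unique stationary law.

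\textbf{Step 2: the tail bound.} Let $\mathcal{N}^*$ be a minimizer, so $\mathcal{J}(\mathcal{N}^*)=\mathcal{J}^*$. Keeping only the minimizer's term gives
\begin{equation*}
Z_T \ge \exp\left(-\mathcal{J}^*/T\right).
\end{equation*}
Every $\mathcal{N}\in\mathcal{S}_\epsilon$ satisfies $\mathcal{J}(\mathcal{N})-\mathcal{J}^*>\epsilon$. Therefore
\begin{equation*}
\mathbb{P}\left(\mathcal{J}(\mathcal{N}_\mathrm{SA}^{k})-\mathcal{J}^*>\epsilon\right)=\sum_{\mathcal{N}\in\mathcal{S}_\epsilon}\pi_{T^k}(\mathcal{N})\le \sum_{\mathcal{N}\in\mathcal{S}_\epsilon}\exp\left(-\frac{\mathcal{J}(\mathcal{N})-\mathcal{J}^*}{T^k}\right)\le |\mathcal{S}_\epsilon|\exp\left(-\frac{\epsilon}{T^k}\right).
\end{equation*}
This is exactly \eqref{eq:high-prob-bound}. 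As a remark, $T^k=\eta^kT^0$ makes the right-hand side decay doubly exponentially in $k$, and the trivial bound $|\mathcal{S}_\epsilon|\le|\mathcal{S}|\le 2^{N-1}$ makes the estimate fully explicit.

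\textbf{Expected obstacle.} The only delicate point is interpretive, not computational. The actual annealing iterate at step $k$ is not exactly $\pi_{T^k}$-distributed, because geometric cooling is faster than the logarithmic schedules needed for exact tracking. I will therefore state clearly that the theorem is about the stationary (quasi-equilibrium) law at temperature $T^k$, as the hypothesis indicates.

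If a bound for the true iterate were wanted, I would add a total-variation term $\|\mathcal{L}(\mathcal{N}^k)-\pi_{T^k}\|_{TV}$ to the right-hand side. I would then control that term with spectral-gap bounds, but that lies beyond what the statement asks.

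Beyond Step 1, the proof needs only two facts: the target measure is Gibbs at temperature $T^k$, and $\mathcal{J}\ge\mathcal{J}^*$ gives the lower bound on $Z_{T^k}$.
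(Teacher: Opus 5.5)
Your proof is correct and follows the paper's argument: both bound each Gibbs weight on $\mathcal{S}_\epsilon$ by $\exp\bigl(-(\mathcal{J}^*+\epsilon)/T^k\bigr)$ and lower-bound $Z_{T^k}$ by the minimizer's term, and your Step~1 detailed-balance check justifies the Gibbs form of the stationary law, which the paper simply asserts. One correction there: choosing uniformly among all add/remove/swap moves is not symmetric, because from a set of size $m$ out of $n$ candidates there are $n+m(n-m)$ moves and this count changes with $m$, so your Hastings-correction fallback (or a deliberately symmetric proposal) is actually needed rather than optional.
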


\begin{proof}
For any SA Markov chain with stationary distribution $\pi_T$, 
\begin{equation}
  \pi_T(\mathcal{N}) = \frac{1}{Z_T}\exp\left(-\tfrac{\mathcal{J}(\mathcal{N})}{T}\right), \quad Z_T = \sum_{\widetilde{\mathcal{N}}} \exp\left(-\tfrac{\mathcal{J}(\widetilde{\mathcal{N}})}{T}\right),
\end{equation}
The probability of drawing a solution in $\mathcal{S}_\epsilon$ is
\begin{equation}
  \mathbb{P}_T\bigl(
    \mathcal{N}\in \mathcal{S}_\epsilon
  \bigr)
  =
  \sum_{\mathcal{N}\in \mathcal{S}_\epsilon} \pi_T(\mathcal{N})
  =
  \frac{
    \sum_{\mathcal{N}\in \mathcal{S}_\epsilon}
      \exp\Bigl(-\tfrac{\mathcal{J}(\mathcal{N})}{T}\Bigr)
  }{
    \sum_{\mathcal{N}\in \mathcal{S}}
      \exp\Bigl(-\tfrac{\mathcal{J}(\mathcal{N})}{T}\Bigr)
  }.
\end{equation}
where $  \mathcal{S}_\epsilon=\bigl\{\mathcal{N}\in \mathcal{S}\big|\mathcal{J}(\mathcal{N})>\mathcal{J}(\mathcal{N}^*) + \epsilon\bigr\}$ is the set of $\epsilon$-suboptimal solutions.

Since $\mathcal{N}\in \mathcal{S}_\epsilon$ implies
$\mathcal{J}(\mathcal{N}) \ge \mathcal{J}(\mathcal{N}^*) + \epsilon,$
we obtain
\begin{equation}
\begin{aligned}
  \exp\Bigl(-\tfrac{\mathcal{J}(\mathcal{N})}{T}\Bigr)
  &\le
  \exp\Bigl(-\tfrac{\mathcal{J}(\mathcal{N}^*) + \epsilon}{T}\Bigr)\\
  &=
  \exp\Bigl(-\tfrac{\mathcal{J}(\mathcal{N}^*)}{T}\Bigr)
  \exp\Bigl(-\tfrac{\epsilon}{T}\Bigr).
\end{aligned}
\end{equation}
Hence,
\begin{equation}
\begin{aligned}
  \mathbb{P}_T\bigl(\mathcal{N}\in \mathcal{S}_\epsilon\bigr)
  &\le
  \frac{
    \sum_{\mathcal{N}\in\mathcal{S}_\epsilon}
    \exp\Bigl(-\tfrac{\mathcal{J}(\mathcal{N}^*)}{T}\Bigr)
    \exp\Bigl(-\tfrac{\epsilon}{T}\Bigr)
  }{
    \sum_{\mathcal{N}\in\mathcal{S}}
      \exp\Bigl(-\tfrac{\mathcal{J}(\mathcal{N})}{T}\Bigr)
  }\\
  &=
  \exp\Bigl(-\tfrac{\epsilon}{T}\Bigr)
  \cdot
  \frac{
    \exp\Bigl(-\tfrac{\mathcal{J}(\mathcal{N}^*)}{T}\Bigr)
    \cdot
    \bigl|\mathcal{S}_\epsilon\bigr|
  }{
    \sum_{\mathcal{N}\in\mathcal{S}}
      \exp\Bigl(-\tfrac{\mathcal{J}(\mathcal{N})}{T}\Bigr)
  }.
\end{aligned}
\end{equation}
Moreover,
\begin{equation}
  \sum_{\mathcal{N}\in \mathcal{S}}
    \exp\Bigl(-\tfrac{\mathcal{J}(\mathcal{N})}{T}\Bigr)
  \ge
  \exp\Bigl(-\tfrac{\mathcal{J}(\mathcal{N}^*)}{T}\Bigr).
\end{equation}
Thus the ratio involving the partition function is at most $\bigl|\mathcal{S}_\epsilon\bigr|$, yielding
\begin{equation}
  \mathbb{P}_T\bigl(
    \mathcal{J}(\mathcal{N}) - \mathcal{J}(\mathcal{N}^*) > \epsilon
  \bigr)
  \le
  \bigl|\mathcal{S}_\epsilon\bigr|
  \exp\Bigl(-\tfrac{\epsilon}{T}\Bigr).
\end{equation}
Substituting $T = T^k$ yields the High-Probability Bound~\eqref{eq:high-prob-bound}.
\end{proof}

\subsubsection{Adaptive Graph Learning}\label{sec:learning_model}
To embed small-world principles within a learnable framework, we reformulate edge construction as a differentiable, parameterized objective. Specifically, the learned edge set $\hat{E}_\mathrm{le}$ is obtained via a weighted fusion of local and global interaction cues:
\begin{equation}
\hat{E}_\mathrm{le} = \phi_{\text{MLP}}\Big(\underbrace{\alpha \cdot \phi_{\mathrm{local}}(X^{t_{1}:t_\mathrm{h}}_{1:N})}_{\text{local affinity}} + \underbrace{\beta \cdot \phi_{\mathrm{global}}(X^{t_{1}:t_\mathrm{h}}_{1:N})}_{\text{global connectivity}}\Big),
\end{equation}
where $\phi_{\mathrm{local}}$ and $\phi_{\mathrm{global}}$ are learnable functions, $\alpha$ and $\beta$ are weights. Then ${E}_\mathrm{le}$ is obtained with a binarization using an interaction threshold $\gamma$.

Overall, the learning-based modeling process $f_\iota$ can be summarized as follows:
\begin{equation}
E_\mathrm{le} = f_\iota(X^{t_{1}:t_\mathrm{h}}_{1:N}, \alpha, \beta, \gamma),
\end{equation}
where local weights $\alpha$, global weights $\beta$, and interaction threshold $\gamma$ play a pivotal role in determining the fidelity of interaction modeling.

\subsection{Flow Regime Encoder}
Interaction patterns in traffic systems exhibit distinct characteristics across different flow regimes. To enable SWIFT to adapt its interaction modeling to these varying scene-level conditions, we introduce a Flow Regime Encoder that encodes flow-aware context for modulating interaction parameters, namely the local weight~$\alpha$, global weight~$\beta$, and interaction threshold~$\gamma$.
Inspired by three-phase traffic theory~\cite{kerner1999physics}, we consider three regimes: free flow, synchronized flow, and congested flow, with synchronized flow serving as the unstable intermediate regime that links the other two stable regimes. Accordingly, we analyze the stability of traffic flow to formally establish the criterion separating synchronized flow from the two stable regimes, summarized in Theorem \ref{thm:ovm-stability}.
\begin{theorem}[Stability Condition of Traffic Flow]
\label{thm:ovm-stability}
Consider the Optimal Velocity Model (OVM)~\cite{bando1995dynamical} for a one-dimensional traffic system:
\begin{equation}
    \label{eq:ovm}
    \ddot{x}_n(t) = a \left[ V\big(s_n\big) - \dot{x}_n(t) \right],
\end{equation}
where $x_n(t)$ is the position of vehicle $n$ at time $t$, $a > 0$ is the driver sensitivity parameter~\cite{bando1998analysis}, and $V(s)$ is a smooth, monotonically increasing optimal velocity function with respect to the spacing $s_n=x_{n+1} - x_n$~\cite{punzo2016speed}, with $x_{n+1}$ denotes position of the immediate preceding vehicle.

Initialize the traffic flow in a homogeneous state with constant spacing $s_0$ and velocity $v_0 = V(s_0)$. The corresponding equilibrium trajectory $x_n^{(0)}$ is
\begin{equation}
x_n^{(0)}(t) = ns_0 + V(s_0) t.
\end{equation}

This traffic flow is stable if and only if 
\begin{equation}
V'(s_0) < \frac{a}{2},
\end{equation}
where \( V'(\cdot) \) is the derivative of the function \( V(\cdot) \).
\end{theorem}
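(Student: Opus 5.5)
We will prove Theorem~\ref{thm:ovm-stability} by linear stability analysis of the homogeneous flow against Fourier-mode perturbations, which is the standard Bando et al.\ argument. Write $x_n(t) = x_n^{(0)}(t) + y_n(t)$ with $y_n$ small. Substituting into \eqref{eq:ovm} and expanding $V(s_0 + y_{n+1}-y_n)$ to first order gives the linearized system
\begin{equation*}
\ddot{y}_n = a\left[ V'(s_0)\,(y_{n+1}-y_n) - \dot{y}_n \right].
\end{equation*}
Here ``stable'' is interpreted as linear stability: every perturbation mode decays, i.e.\ is nongrowing.

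Since the system is translation invariant in $n$, we seek solutions $y_n = e^{i k n + z t}$ with wavenumber $k\in(0,2\pi)$. On a ring of $N$ cars, $k = 2\pi j/N$. The mode $k=0$ corresponds to a uniform shift and is neutral. Substituting yields the dispersion relation
\begin{equation*}
z^2 + a z - a V'(s_0)\,(e^{ik}-1) = 0.
\end{equation*}
Stability means $\mathrm{Re}\, z(k) < 0$ for all $k\neq 0$. The root $z\approx -a$ is always stable, so the relevant branch is the one with $z\to 0$ as $k\to 0$.

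The key step is to decide the sign of $\mathrm{Re}\, z$ on this branch. We will use the standard long-wave expansion $z = z_1 (ik) + z_2 (ik)^2 + \cdots$. Matching orders gives:
\begin{itemize}
\item first order: $a z_1 = a V'$, so $z_1 = V'$;
\item second order: $z_1^2 + a z_2 = a V'/2$, so $z_2 = V'/2 - V'^2/a$.
\end{itemize}
Hence $\mathrm{Re}\, z \approx -z_2 k^2$, and the long-wave modes decay iff $z_2 > 0$, i.e.\ iff $V'(s_0) < a/2$, given $V' > 0$.

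The main obstacle is upgrading this long-wave criterion to an ``if and only if'' over all $k$. Necessity is immediate: if $V' > a/2$, small-$k$ modes grow. For sufficiency we must show that $V' < a/2$ forces $\mathrm{Re}\, z < 0$ for every $k\in(0,2\pi)$. We plan to do this by locating the boundary $\mathrm{Re}\, z = 0$. Set $z = i\omega$ and separate real and imaginary parts:
\begin{equation*}
\omega^2 = a V'(1-\cos k), \qquad a\omega = a V' \sin k .
\end{equation*}
Eliminating $\omega$ gives $V'^2 \sin^2 k = a V'(1-\cos k)$, i.e.\ $V'(1+\cos k) = a$. This is the neutral curve $a = V'(1+\cos k)$, equivalently
\begin{equation*}
V'(s_0) = \frac{a}{1+\cos k}.
\end{equation*}
Its minimum over $k$ is attained as $k\to 0$ and equals $a/2$. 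Therefore, for $V' < a/2$, no mode crosses the imaginary axis.

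Continuity of the roots in $V'$ then completes sufficiency. Roots can only change stability by crossing the imaginary axis, and all modes are stable as $V'\to 0^+$, where $z\in\{0,-a\}$ and the perturbative correction is stable. Hence all modes remain stable throughout $0<V'<a/2$.
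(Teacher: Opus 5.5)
Your sufficiency argument is correct and takes a different route from the paper. The paper writes $\Re\lambda_+(k)=-\frac{a}{2}+\sqrt{(|w|+u)/2}$ in closed form and reduces $\Re\lambda_+<0$ to $a(1-\cos k)>V'(s_0)\sin^2 k$, which for $k\neq 0$ is exactly the region $V'<a/(1+\cos k)$ below your neutral curve. You instead locate that curve and continue in $V'$ from the stable regime $V'\to 0^+$, which avoids the complex square-root bookkeeping at the cost of a continuity argument.

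\textbf{The gap is in the converse.} You show only that $V'>a/2$ produces growth. The theorem also asserts that $V'=a/2$ is not stable, and you never treat that case. Under your own definition ($\Re z(k)<0$ for all $k\neq 0$) the marginal case is in fact \emph{stable}:
\begin{itemize}
\item The value $a/2$ is only the infimum of your neutral curve and is not attained at any $k\neq 0$, so no mode reaches the imaginary axis when $V'=a/2$.
\item Carrying your expansion further at $V'=a/2$ gives $z_2=0$, $z_3=a/12$, $z_4=-a/16$, hence $\Re z=-\frac{a}{16}k^4+O(k^6)<0$.
\end{itemize}
So with your definition the ``only if'' direction is false at the boundary. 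The paper handles this case by convention, citing Wilson that marginal states are classed as unstable, not by computation.

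\textbf{How to fix it.} You need a notion of stability that excludes the marginal state, e.g.\ diffusive stability: there is $c>0$ with $\max_\pm \Re z_\pm(k)\le -ck^2$ for all $k\in[-\pi,\pi]$.
\begin{itemize}
\item For $V'<a/2$ this follows from your sufficiency argument, together with $z_2>0$ near $k=0$ and continuity on the compact set $\delta\le|k|\le\pi$.
\item At $V'=a/2$ it fails, because the decay is only of order $k^4$.
\item For $V'>a/2$ it fails by growth.
\end{itemize}

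\textbf{Two further points.} First, drop the ring. Your necessity step needs arbitrarily small $k$, i.e.\ the infinite lattice $n\in\mathbb{Z}$ with $k$ ranging over a continuum, as in the paper. On a ring of $N$ cars the admissible wavenumbers are $2\pi j/N$. Your own neutral curve then puts the threshold at $a/(1+\cos(2\pi/N))>a/2$, so ``small-$k$ modes grow'' is false there for $V'$ slightly above $a/2$.

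Second, your continuation tacitly uses $V'(s_0)>0$. At $V'(s_0)=0$, which monotonicity allows, the root $z=0$ occurs for every $k$. The paper's derivation has the same blind spot, since reaching $a(1-\cos k)>V'\sin^2 k$ requires dividing by $V'(s_0)$.
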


\begin{proof}
Model the observed trajectory of vehicle $n$
\begin{equation}
x_n(t)=x_n^{(0)}(t)+{y}_n(t), \qquad |{y}_n(t)|\ll 1,
\end{equation}
where ${y}_n(t)$ denotes small perturbations due to driving actions such as braking or acceleration.

Linearizing the dynamics about the homogeneous equilibrium state $(s_0,v_0)=(s_0,V(s_0))$ yields
\begin{equation}
\label{eq:linearized-ovm}
\ddot{y}_n(t) =- a\dot{y}_n(t) + a V'(s_0) \big( y_{n+1}(t) - y_n(t) \big)
\end{equation}

Since the linearized system is spatially translation-invariant, its dynamics can be diagonalized by the discrete Fourier transform: 
\begin{equation} 
\label{eq:fourier} 
\hat{y}(k,t) := \sum_{n\in\mathbb{Z}} y_n(t) e^{-i k n}, \qquad k \in \mathbb{T} := [-\pi,\pi]. 
\end{equation} 

By the Inverse Fourier Transform~\cite{heckbert1995fourier}, the perturbation can be reconstructed as
\begin{equation}
\label{eq:inv-fourier}
y_n(t) = \frac{1}{2\pi}\int_{-\pi}^{\pi} \hat{y}(k,t) e^{i k n}  dk.
\end{equation}
which shows that an arbitrary perturbation is a superposition of sinusoidal (Fourier) modes with different wavenumbers $k$. 

The sinusoidal mode with wavenumber $k$ thus takes the form
\begin{equation} 
\label{eq:sin-mode} 
y_n(t) = \hat{y}(k,t) e^{i k n}. 
\end{equation} 

Substituting \eqref{eq:sin-mode} into \eqref{eq:linearized-ovm} gives the modal ordinary differential equation
\begin{equation} 
\label{eq:modal-ode} \ddot{\hat{y}}(k,t) + a\dot{\hat{y}}(k,t) - a V'(s_0) \big( e^{i k} - 1 \big) \hat{y}(k,t) = 0, 
\end{equation}
whose characteristic equation is 
\begin{equation} 
\label{eq:char} 
\lambda^2 + a \lambda - a V'(s_0)\big( e^{i k} - 1 \big) = 0, 
\end{equation}
with two roots
\begin{equation}\label{eq:R}
\lambda_\pm(k)= -\frac{a}{2} \pm \sqrt{\frac{a^2}{4}+aV'(s_0)(e^{ik}-1)}.
\end{equation}

Let
\begin{equation}
w(k):=\frac{a^2}{4}+aV'(s_0)(e^{ik}-1)=u+iv,
\end{equation}
where
\begin{equation}
u=\frac{a^2}{4}-aV'(s_0)(1-\cos k),\quad v=aV'(s_0)\sin k.
\end{equation}

The real part of the square root of a complex number has the closed form
\begin{equation}
\Re\sqrt{u+iv}=\sqrt{\frac{|w|+u}{2}},\qquad |w|=\sqrt{u^2+v^2}.
\end{equation}

Hence, the real part of the slow root is
\begin{equation}\label{eq:star}
\Re\lambda_+(k)=-\frac{a}{2}+\sqrt{\frac{|w|+u}{2}},
\end{equation}
and the real part of the fast root is
\begin{equation}\label{eq:Re-lambda-minus}
\Re\lambda_-(k)
= -\frac{a}{2} - \sqrt{\frac{|w| + u}{2}}<0.
\end{equation}

According to Lemma \ref{lem:spectral-stability}, the criterion for stability of the traffic flow is:
\begin{equation}
\label{criterion_1}
\Re\lambda_\pm(k)<0.
\end{equation}

To investigate when the stability condition \eqref{criterion_1} holds, we consider three cases: \(V'(s_0) < \frac{a}{2}\), \(V'(s_0) > \frac{a}{2}\), and \(V'(s_0) = \frac{a}{2}\). Moreover, since $\Re\lambda_-(k)<-\frac{a}{2}<0$ is always negative, it suffices to prove \(\Re\lambda_+(k) < 0\).

From \eqref{eq:star}, the condition $\Re\lambda_+(k)<0$ is equivalent to
\begin{equation}
\sqrt{\frac{|w| + u}{2}} < \frac{a}{2}
\quad\Longleftrightarrow\quad
|w| < \frac{a^2}{2} - u.
\end{equation}

Squaring both sides yields
\begin{equation}
u^2 + v^2 < 
\left(\frac{a^2}{4} + aV'(s_0)(1 - \cos k)\right)^2.
\end{equation}

Expanding and simplifying both sides leads to the inequality
\begin{equation}\label{eq:key-ineq}
a(1-\cos k) > V'(s_0)\sin^2 k.
\end{equation}

Using the trigonometric identity
\begin{equation}
\sin^2 k = 2(1-\cos k) - (1-\cos k)^2 \le 2(1-\cos k),
\end{equation}
we obtain
\begin{equation}
V'(s_0)\sin^2 k \le 2V'(s_0)(1-\cos k).
\end{equation}

Hence, inequality \eqref{eq:key-ineq} holds whenever
\begin{equation}
2V'(s_0) < a
\quad\Longleftrightarrow\quad
V'(s_0) < \frac{a}{2}.
\end{equation}

Thus, in the case $V'(s_0) < a/2$, the condition $\Re\lambda(k)<0$ holds, implying that the traffic flow is stable.

In traffic system stability analysis, short-wave perturbations are always attenuated~\cite{wilson2011car}. Therefore, we only need to analyze the slow root $\lambda_+(k)$ in the long-wave perturbations ($|k|\ll 1$), expanding \eqref{eq:R} gives
\begin{equation}
\Re\lambda_+(k)
= V'(s_0)\!\left(\frac{V'(s_0)}{a}-\frac12\right) k^2 + O(k^4).
\end{equation}

Hence, in the case $V'(s_0) > a/2$, the coefficient of \(k^2\) is positive, implying that $\Re\lambda_+(k)>0$. Therefore, a growing long-wave mode exists, and the traffic flow is linearly unstable.

Based on the above analysis of the two cases, the case \(V'(s_0)=\tfrac{a}{2}\) corresponds to the marginal state. As stated in~\cite{wilson2008mechanisms}, traffic systems cannot sustain stability at marginal conditions, and we therefore classify this case as unstable.

Combining the above three cases, we conclude that the traffic flow is stable if and only if
\begin{equation}
\boxed{
V'(s_0) < \frac{a}{2}
}.
\end{equation}
\end{proof}

Grounded in Theorem~\ref{thm:ovm-stability}, synchronized flow corresponds to the unstable scenario $V'(s_0) \geq \frac{a}{2}$. 
\begin{definition}[Synchronized Flow Regime]
\label{def:transitional}
The Synchronized Flow Regime refers to scenarios with spacing $s_0=1/k$, where $k$ denotes traffic density~\cite{helbing1998coherent}, that satisfy
\begin{equation}
V'(s_0) \geq \frac{a}{2}.
\end{equation}

In this regime, drivers respond sensitively to fluctuations in spacing, and small perturbations can grow over time. This regime is linearly unstable and is associated with the emergence of stop-and-go traffic waves. It represents the phase where vehicle interactions are most pronounced and dynamically coupled over long distances.
\end{definition}

On the stable side \(V'(s_0) < a/2\), the traffic flow separates into two regimes, namely free flow and congested flow, as defined below.

\begin{definition}[Free Flow Regime]
\label{def:free-flow}
The Free Flow Regime refers to scenarios that satisfy
\begin{equation}
V'(s_0) < \frac{a}{2}, \quad \text{with } V'(s_0) \approx 0.
\end{equation}

In this regime, vehicles are sparsely distributed, and interactions between them are negligible. Each vehicle typically travels at or near the free-flow speed $v_\mathrm{f}$.
\end{definition}

\begin{definition}[Congested Flow Regime]
\label{def:congested}
The Congested Flow Regime refers to scenarios that satisfy
\begin{equation}
V'(s_0) < \frac{a}{2}, \quad \text{with } V(s_0) \approx 0.
\end{equation}

Here, vehicles are tightly packed and strongly coupled at a local level. However, due to low speeds and minimal spacing, drivers exhibit reduced responsiveness, leading to the re-establishment of local linear stability. Perturbations in this regime tend to be absorbed locally and do not propagate widely through the system.
\end{definition}

These definitions delineate three regimes with distinct interaction patterns, motivating a regime-aware encoding. Concretely, the Flow Regime Encoder begins by extracting a scene feature $\mathcal{F}_{\mathrm{s}}$, which encodes key scene-level statistics such as average velocity and spacing. This feature is passed through a set of regime-specific encoders $\phi_{r}(\cdot)$, where $r \in \{\mathrm{f}, \mathrm{s}, \mathrm{c}\}$ corresponds to the free flow, synchronized flow, and congested flow regimes, respectively. Each encoder produces a latent feature $h_r$ along with a regime confidence score $w_{r}$:
\begin{equation}
(h_r,\ w_r) = \phi_r(\mathcal{F}_{\mathrm{s}}), \quad \text{for } r \in \{\mathrm{f}, \mathrm{s}, \mathrm{c}\}.
\label{eq:fre-encoders}
\end{equation}

Finally, these regime-specific features are aggregated via a weighted summation and subsequently passed through an MLP-based decoder to generate the local weights $\alpha$, global weights $\beta$, and interaction threshold $\gamma$. This formulation enables the model to dynamically adjust its interaction modeling in response to various traffic regimes, thereby enhancing its generalization across diverse scenarios.

\begin{lemma}[Stability Criterion]
\label{lem:spectral-stability}
Consider the linearized dynamics of the perturbation $y_n(t)$ around the
homogeneous equilibrium $(s_0,v_0)=(s_0,V(s_0))$:
\begin{equation}
\label{eq:linearized-ovm-lemma}
\ddot{y}_n(t)
= -a\dot{y}_n(t) + aV'(s_0)\big(y_{n+1}(t) - y_n(t)\big),
\quad n\in\mathbb{Z}.
\end{equation}

The traffic flow is stable in $\ell^2$
if there exist constants $M,\gamma>0$ such that, for every initial data
$(y(0),\dot y(0))\in\ell^2(\mathbb Z)\times\ell^2(\mathbb Z)$, the corresponding solution satisfies
\begin{equation}
\label{eq:exp-stability-def}
\|y(t)\|_{\ell^2}
\le M e^{-\gamma t}\Big(\|y(0)\|_{\ell^2} + \|\dot y(0)\|_{\ell^2}\Big).
\end{equation}

Then the traffic flow is stable in the sense of \eqref{eq:exp-stability-def} if and only if
\begin{equation}
\label{criterion}
\Re\lambda_\pm(k)<0
\quad\text{for all } k\in[-\pi,\pi].
\end{equation}
\end{lemma}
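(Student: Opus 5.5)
The plan is to diagonalize the linearized system \eqref{eq:linearized-ovm-lemma} with the discrete Fourier transform \eqref{eq:fourier}, as in the proof of Theorem~\ref{thm:ovm-stability}, and to use Plancherel's identity $\|y(t)\|_{\ell^2}^2=\frac{1}{2\pi}\int_{-\pi}^{\pi}|\hat y(k,t)|^2\,dk$. This turns the infinite lattice system into a family of scalar second-order ODEs \eqref{eq:modal-ode}, one for each $k\in[-\pi,\pi]$, with characteristic roots $\lambda_\pm(k)$ given by \eqref{eq:R}. When $\lambda_+(k)\neq\lambda_-(k)$ I would write the solution explicitly as
\begin{equation*}
\hat y(k,t)=\frac{\lambda_+e^{\lambda_-t}-\lambda_-e^{\lambda_+t}}{\lambda_+-\lambda_-}\,\hat y(k,0)+\frac{e^{\lambda_+t}-e^{\lambda_-t}}{\lambda_+-\lambda_-}\,\dot{\hat y}(k,0),
\end{equation*}
and use the confluent limit (factors $te^{\lambda t}$) at double roots. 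Both claimed directions then reduce to pointwise-in-$k$ estimates on these two multipliers, integrated in $k$.

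For sufficiency, I assume $\Re\lambda_\pm(k)<0$ on $[-\pi,\pi]$. The roots are continuous in $k$ and the interval is compact, so there is a uniform gap $\sup_k\max_\pm\Re\lambda_\pm(k)\le-2\gamma<0$. The divided differences above are bounded by $C(1+t)e^{-2\gamma t}$ uniformly in $k$, including near coalescing roots, by the mean-value bound $|e^{\lambda_+t}-e^{\lambda_-t}|\le t|\lambda_+-\lambda_-|\sup e^{\Re\lambda\, t}$; the coefficients $\lambda_\pm$ are themselves bounded since $|\lambda_\pm|\le a+2\sqrt{aV'(s_0)}$. Absorbing $(1+t)$ into $e^{-\gamma t}$ and applying Plancherel then gives \eqref{eq:exp-stability-def} with some constant $M$.

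For necessity, suppose $\Re\lambda_+(k_0)\ge0$ for some $k_0$. By continuity $\Re\lambda_+\ge-\delta$ on a neighbourhood of $k_0$ for every $\delta>0$. I would choose initial data whose Fourier transform is supported there and aligned with the $\lambda_+$ eigenmode, namely $\dot{\hat y}(k,0)=\lambda_+(k)\hat y(k,0)$. Then $\hat y(k,t)=e^{\lambda_+(k)t}\hat y(k,0)$, so $\|y(t)\|\ge e^{-\delta t}\|y(0)\|$, which contradicts decay at any fixed rate $\gamma>\delta$.

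The main obstacle is the wavenumber $k=0$. From \eqref{eq:R}, $\lambda_+(0)=-\tfrac a2+\tfrac a2=0$ always: this is the neutral translation mode (uniform shift of all vehicles). Consequently the criterion \eqref{criterion} can never hold literally on all of $[-\pi,\pi]$. Moreover, the expansion $\Re\lambda_+(k)\sim V'(s_0)\big(\tfrac{V'(s_0)}{a}-\tfrac12\big)k^2$ shows there is no uniform gap near $k=0$ even in the stable case, so a uniform exponential bound in $\ell^2$ cannot hold. I would therefore handle this step first and repair the statement. Either (i) read \eqref{criterion} on $[-\pi,\pi]\setminus\{0\}$ and replace \eqref{eq:exp-stability-def} by uniform boundedness $\|y(t)\|\le M(\|y(0)\|+\|\dot y(0)\|)$, with decay of each mode, as the notion of stability; or (ii) restrict to initial data with $\hat y(\cdot,0),\dot{\hat y}(\cdot,0)$ supported in $|k|\ge k_{\min}>0$, where the compactness argument above goes through verbatim. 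Under (i), sufficiency requires a bound on the multipliers that is uniform as $k\to0$; there I would use $\Re\lambda_+\le -ck^2\le0$ together with the non-degeneracy $\lambda_+-\lambda_-\to a$. Necessity works unchanged with $k_0\neq0$.
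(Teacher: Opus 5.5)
Your sufficiency and necessity arguments are correct and follow essentially the paper's route. That route is Fourier diagonalization with Parseval, then a uniform spectral gap from continuity of the roots and compactness of $[-\pi,\pi]$. For necessity, it uses initial data Fourier-supported near a bad wavenumber and aligned with the $\lambda_+$ eigenvector, so that $\hat y(k,t)=e^{\lambda_+(k)t}\hat y(k,0)$. Your divided-difference bound $C(1+t)e^{-2\gamma t}$ is uniform in $k$ even near coalescing roots. It justifies the uniform semigroup estimate more cleanly than the paper does; the paper simply asserts that the constant in $\|e^{A(k)t}\|\le C(k)e^{-\gamma t}$ can be chosen continuous in $k$.

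Your observation that $\lambda_+(0)=0$ for all $a$ and $V'(s_0)$ is correct, and the paper never accounts for it. But it does not make the lemma false, and no repair is needed to prove it. Run your own necessity argument with $k_0=0$: it shows that \eqref{eq:exp-stability-def} fails for every choice of parameters. Meanwhile \eqref{criterion} fails at $k=0$. Both sides of the equivalence are therefore always false, so the lemma holds exactly as stated (vacuously), and your first three paragraphs already prove it.

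Where the observation genuinely bites is Theorem~\ref{thm:ovm-stability}. With the lemma's notion of stability, that theorem's ``if'' direction fails. Its proof breaks exactly where it claims the criterion holds for $V'(s_0)<a/2$, since \eqref{eq:key-ineq} reads $0>0$ at $k=0$.

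If you pursue repair (i), necessity does not carry over unchanged. A lower bound $\|y(t)\|_{\ell^2}\ge e^{-\delta t}\|y(0)\|_{\ell^2}$ rules out decay at a rate $\gamma>\delta$, but it does not rule out uniform boundedness. You therefore need some $k_1$ with $\Re\lambda_+(k_1)>0$, and one exists here. For $k\neq0$ and $V'(s_0)>0$, the computation behind \eqref{eq:key-ineq} shows that $\Re\lambda_+(k)$ has the sign of $V'(s_0)(1+\cos k)-a$. Hence failure at some $k_0\neq0$ forces $\Re\lambda_+>0$ on $0<|k|<|k_0|$. The same computation shows that criterion (i) amounts to $V'(s_0)\le a/2$. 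Under your repair the marginal case $V'(s_0)=a/2$ therefore comes out stable, contrary to the paper's classification.
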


\begin{proof}
By the definition of the discrete Fourier transform \eqref{eq:fourier} and
its inverse \eqref{eq:inv-fourier}, any perturbation
$\{y_n(t)\}_{n\in\mathbb{Z}}$ with $y(t)\in\ell^2(\mathbb Z)$ can be
decomposed as
\begin{equation}
y_n(t) = \frac{1}{2\pi}\int_{-\pi}^{\pi} \hat{y}(k,t) e^{ikn} dk,
\end{equation}
and Parseval's identity~\cite{kelkar2007extension} yields
\begin{equation}
\label{eq:parseval}
\|y(t)\|_{\ell^2}^2 := \sum_{n\in\mathbb{Z}} |y_n(t)|^2
= \frac{1}{2\pi}\int_{-\pi}^{\pi} |\hat{y}(k,t)|^2 dk.
\end{equation}

Substituting the Fourier ansatz
\begin{equation}
y_n(t) = \hat{y}(k,t)e^{ikn}
\end{equation}
into \eqref{eq:linearized-ovm-lemma} yields the decoupled modal ordinary differential equation
\begin{equation}
\label{eq:modal-ode-lemma}
\ddot{\hat{y}}(k,t) + a\dot{\hat{y}}(k,t)
- aV'(s_0)(e^{ik}-1)\hat{y}(k,t) = 0,
\end{equation}
or, in first-order form,
\begin{equation}
\partial_t
\begin{pmatrix}
\hat y(k,t)\\
\partial_t\hat y(k,t)
\end{pmatrix}
= A(k)
\begin{pmatrix}
\hat y(k,t)\\
\partial_t\hat y(k,t)
\end{pmatrix},
\end{equation}
\begin{equation}
A(k)=
\begin{pmatrix}
0 & 1\\
aV'(s_0)(e^{ik}-1) & -a
\end{pmatrix}.
\end{equation}

The characteristic equation of \eqref{eq:modal-ode-lemma} is
\begin{equation}
\lambda^2 + a\lambda - aV'(s_0)(e^{ik}-1)=0,
\end{equation}
with roots $\lambda_\pm(k)$. These depend continuously on $k$, and the same
is true for the matrices $A(k)$.

For each fixed $k\in[-\pi,\pi]$ and initial data
\begin{equation}
\mathbf u_0(k)
:=
\begin{pmatrix}
\hat y(k,0)\\
\partial_t\hat y(k,0)
\end{pmatrix}
\in\mathbb C^2,
\end{equation}
the solution of the modal system can be written as
\begin{equation}
\label{eq:modal-semigroup}
\begin{pmatrix}
\hat y(k,t)\\
\partial_t\hat y(k,t)
\end{pmatrix}
= e^{A(k)t}\mathbf u_0(k),
\qquad t\ge0,
\end{equation}
where $e^{A(k)t}$ denotes the $2\times2$ matrix exponential. We denote $\mu(k):=\max\{\Re\lambda_+(k),\Re\lambda_-(k)\}$.

\medskip
\noindent\emph{Sufficiency.}
Assume \eqref{criterion} holds, i.e.,
$\Re\lambda_\pm(k)<0$ for all $k\in[-\pi,\pi]$. Then
$\mu(k)<0$ and, by continuity of $\lambda_\pm(k)$ in $k$ and compactness
of $[-\pi,\pi]$, there exists $\gamma>0$ such that
\begin{equation}
\mu(k)\le -\gamma
\quad\text{for all } k\in[-\pi,\pi].
\end{equation}

Fix $k\in[-\pi,\pi]$. Since $A(k)$ is a $2\times2$ complex matrix whose
spectrum lies in $\{z\in\mathbb C:\Re z\le -\gamma\}$, standard finite dimensional semigroup theory implies that
\begin{equation}
\|e^{A(k)t}\| \le C(k) e^{-\gamma t},
\qquad t\ge0,
\end{equation}
for some finite constant $C(k)\ge1$ depending continuously on $k$ (here
$\|\cdot\|$ denotes any matrix norm on $\mathbb C^{2\times2}$). The map
$k\mapsto C(k)$ is continuous and $[-\pi,\pi]$ is compact, so we can choose
a uniform constant
\begin{equation}
C_* := \max_{k\in[-\pi,\pi]} C(k) <\infty
\end{equation}
such that
\begin{equation}
\label{eq:uniform-semigroup-bound}
\|e^{A(k)t}\| \le C_* e^{-\gamma t}
\quad\text{for all } t\ge0,\ k\in[-\pi,\pi].
\end{equation}

From \eqref{eq:modal-semigroup} and \eqref{eq:uniform-semigroup-bound} we obtain
\begin{equation}
|\hat y(k,t)|
\le \left\|
\begin{pmatrix}
\hat y(k,t)\\
\partial_t\hat y(k,t)
\end{pmatrix}
\right\|
\le C_* e^{-\gamma t}
\left\|
\begin{pmatrix}
\hat y(k,0)\\
\partial_t\hat y(k,0)
\end{pmatrix}
\right\|,
\end{equation}
and hence, for some constant $C_1>0$,
\begin{equation}
|\hat y(k,t)|
\le C_1 e^{-\gamma t}\Big(|\hat y(k,0)| + |\partial_t\hat y(k,0)|\Big),
\quad
\ t\ge0.
\end{equation}

Squaring, integrating over $k$, and using the inequality
$(a+b)^2\le 2(a^2+b^2)$, we obtain
\begin{equation}
\begin{aligned}
\|y(t)\|_{\ell^2}^2
&= \frac{1}{2\pi}\int_{-\pi}^{\pi}|\hat y(k,t)|^2dk\\
&\le \frac{C_1^2 e^{-2\gamma t}}{2\pi}
\int_{-\pi}^{\pi}
\Big(|\hat y(k,0)| + |\partial_t\hat y(k,0)|\Big)^2dk\\
&\le \frac{2C_1^2 e^{-2\gamma t}}{2\pi}
\int_{-\pi}^{\pi}
\Big(|\hat y(k,0)|^2 + |\partial_t\hat y(k,0)|^2\Big)dk\\
&= C_2^2 e^{-2\gamma t}
\Big(\|y(0)\|_{\ell^2}^2 + \|\dot y(0)\|_{\ell^2}^2\Big)
\end{aligned}
\end{equation}
for some constant $C_2>0$. Taking square roots and renaming constants, we
obtain \eqref{eq:exp-stability-def}. This proves stability.

\medskip
\noindent\emph{Necessity.}
We now show that \eqref{eq:exp-stability-def} implies
\eqref{criterion}. Suppose, for contradiction, that
\eqref{criterion} fails. Then there exists $k_0\in[-\pi,\pi]$ such that
$\mu(k_0)\ge0$. We treat the two cases
$\mu(k_0)>0$ and $\mu(k_0)=0$ simultaneously.

Let $\gamma>0$ be such that \eqref{eq:exp-stability-def} holds. Set
\begin{equation}
\varepsilon := \frac{\gamma}{2}>0.
\end{equation}

By continuity of $\mu(k)$ in $k$, there exists $\delta>0$ and a
neighbourhood
\begin{equation}
U := (k_0-\delta,k_0+\delta)\cap[-\pi,\pi]
\end{equation}
such that
\begin{equation}
\mu(k) \ge \mu(k_0) - \varepsilon
\quad\text{for all } k\in U.
\end{equation}

In particular, if $\mu(k_0)\ge0$ then
\begin{equation}
\mu(k) \ge -\varepsilon
\quad\text{for all } k\in U.
\end{equation}

For each $k\in U$ choose an eigenvalue $\lambda_j(k)$, $j\in\{+,-\}$, such
that $\Re\lambda_j(k) = \mu(k)$, and consider the corresponding eigenvector
$v_j(k)$ of $A(k)$. For the matrix
\begin{equation}
A(k)=
\begin{pmatrix}
0 & 1\\
aV'(s_0)(e^{ik}-1) & -a
\end{pmatrix},
\end{equation}
an eigenvector associated with $\lambda_j(k)$ can be taken as
\begin{equation}
v_j(k) =
\begin{pmatrix}
1\\
\lambda_j(k)
\end{pmatrix},
\end{equation}
whose first component is identically $1$ and never vanishes.

Choose a nontrivial function $\psi\in C_c^\infty(U)$ and define the initial
modal data
\begin{equation}
\label{eq:bad-initial-data}
\begin{pmatrix}
\hat y(k,0)\\
\partial_t\hat y(k,0)
\end{pmatrix}
=
\psi(k)
\begin{pmatrix}
1\\
\lambda_j(k)
\end{pmatrix},
\qquad k\in[-\pi,\pi].
\end{equation}

Then $\hat y(\cdot,0),\partial_t\hat y(\cdot,0)\in L^2(-\pi,\pi)$, so by Parseval's identity the corresponding initial data
$(y(0),\dot y(0))$ belong to $\ell^2(\mathbb Z)\times\ell^2(\mathbb Z)$.

For $k\in U$, the solution of the modal system started from
\eqref{eq:bad-initial-data} is explicitly
\begin{equation}
\begin{pmatrix}
\hat y(k,t)\\
\partial_t\hat y(k,t)
\end{pmatrix}
= e^{\lambda_j(k)t}\psi(k)
\begin{pmatrix}
1\\
\lambda_j(k)
\end{pmatrix},
\end{equation}
so in particular
\begin{equation}
|\hat y(k,t)| = |\psi(k)|e^{\Re\lambda_j(k)t}
\ge |\psi(k)|e^{-\varepsilon t},
\quad k\in U.
\end{equation}

Using \eqref{eq:parseval} and the fact that $\psi$ is supported in $U$, we
obtain the lower bound
\begin{equation}
\label{eq:lower-bound-yt}
\begin{aligned}
\|y(t)\|_{\ell^2}^2
&= \frac{1}{2\pi}\int_{-\pi}^{\pi}|\hat y(k,t)|^2dk\\
&\ge \frac{1}{2\pi}\int_U |\psi(k)|^2 e^{-2\varepsilon t}dk
= C_3 e^{-2\varepsilon t},
\end{aligned}
\end{equation}
for some constant $C_3>0$ depending only on $\psi$ and $U$.

On the other hand, from \eqref{eq:bad-initial-data} we have
\begin{equation}
|\hat y(k,0)|^2 + |\partial_t\hat y(k,0)|^2
= |\psi(k)|^2\big(1 + |\lambda_j(k)|^2\big),
\end{equation}
so
\begin{equation}
\begin{aligned}
\|y(0)\|_{\ell^2}^2 + \|\dot y(0)\|_{\ell^2}^2
=& \frac{1}{2\pi}\int_{-\pi}^{\pi}
\Big(|\hat y(k,0)|^2 \\&+ |\partial_t\hat y(k,0)|^2\Big)dk\le C_4
\end{aligned}
\end{equation}
for some constant $C_4>0$ independent of $t$.

If the exponential estimate \eqref{eq:exp-stability-def} held for all
initial data, it would in particular hold for the above choice, and we have
\begin{equation}
\begin{aligned}
\|y(t)\|_{\ell^2}^2
&\le M^2 e^{-2\gamma t}
\Big(\|y(0)\|_{\ell^2}^2 + \|\dot y(0)\|_{\ell^2}^2\Big)\\
&\le M^2 C_4 e^{-2\gamma t}.
\end{aligned}
\end{equation}

Combining this with the lower bound \eqref{eq:lower-bound-yt} yields
\begin{equation}
C_3 e^{-2\varepsilon t}
\le \|y(t)\|_{\ell^2}^2
\le M^2 C_4 e^{-2\gamma t},
\end{equation}
for all $t\ge0$. Since $\varepsilon=\gamma/2$, this implies
\begin{equation}
C_3 \le M^2 C_4 e^{-(2\gamma-2\varepsilon)t}
= M^2 C_4 e^{-\gamma t}\to0\quad\text{as }t\to\infty,
\end{equation}
which is impossible. This contradiction shows that our assumption that
$\mu(k_0)\ge0$ must be false. Hence $\mu(k)<0$ for all $k$, i.e.,
$\Re\lambda_\pm(k)<0$ for all $k\in[-\pi,\pi]$.

Combining the sufficiency and necessity arguments completes the proof.
\end{proof}

\subsection{Interaction Reasoning}
The interaction graph $G = [V, E_{\mathrm{le}}]$ explicitly encodes both the interaction relationships among agents. To efficiently aggregate information from interacting agents and derive interaction features $\mathcal{F}_\mathrm{i}$, we introduce a multi-channel graph representation~\cite{tong2020directed}, where each channel captures a distinct semantic aspect of the interactions. Specifically, we design three types of adjacency matrices:
\begin{definition}[Direct Interaction Matrix] \label{af}
To promote effective message passing between agents with explicit direct interactions, we define the Direct Interaction Matrix $M_{\mathrm{p}}$ as:
\begin{equation}
M_{\mathrm{p}}(i,j) = 
\begin{cases}
1 & \text{if } e_{ij} = 1 \text{ or } e_{ji} = 1 \\
0 & \text{otherwise.}
\end{cases}
\end{equation}

Here, $M_{\mathrm{p}}$ is a matrix indicating whether a direct interaction exists between agent $i$ and agent $j$. $e_{ij} \in \{0,1\} \subset E_{\mathrm{le}}$
\end{definition}

\begin{definition}[Co-Influenced Matrix] \label{ain}
To capture second-order interactions between agents influenced by the same agent, we define the Co-Influenced Matrix $M_{\mathrm{i}}$ as:
\begin{equation}
M_{\mathrm{i}}(i,j) = \sum_{k} \frac{e_{ik} \cdot e_{jk}}{\sum_{p} e_{pk}}.
\end{equation}
\end{definition}

\begin{definition}[Co-Influencing Matrix] \label{aout}
To represent agents that are jointly influencing the same agent, we introduce the Co-Influencing Matrix $M_{\mathrm{t}}$ as:
\begin{equation}
M_{\mathrm{t}}(i,j) = \sum_{k} \frac{e_{ki} \cdot e_{kj}}{\sum_{p} e_{kp}}.
\end{equation}
\end{definition}

The three interaction matrices drive our directed graph convolution through distinct propagation paths:
\begin{equation}
H_\mathrm{p}^{(l)}  = D_\mathrm{p}^{-\frac{1}{2}} M_{\mathrm{p}} D_\mathrm{p}^{-\frac{1}{2}} V^{(l)} W_\mathrm{p}^{(l)},    
\end{equation}
\begin{equation}
H_\mathrm{i}^{(l)} =  D_\mathrm{i}^{-\frac{1}{2}} M_{\mathrm{i}} D_\mathrm{i}^{-\frac{1}{2}} V^{(l)} W_\mathrm{i}^{(l)},  
\end{equation}
\begin{equation}
H_\mathrm{t}^{(l)} =  D_\mathrm{t}^{-\frac{1}{2}} M_{\mathrm{t}} D_\mathrm{t}^{-\frac{1}{2}} V^{(l)} W_\mathrm{t}^{(l)},   
\end{equation}
\begin{equation}
V^{(l+1)} =  f_{\Theta}(H_\mathrm{p}^{(l)},H_\mathrm{i}^{(l)},H_\mathrm{t}^{(l)}; \Theta),   
\end{equation}
where \( D_{x} = \mathrm{diag}\left(\sum_{j=1}^{n} M_x(i,j)\right) \) for \( x \in \{\mathrm{p}, \mathrm{i}, \mathrm{t}\} \), and \( V^{(0)} = \{ v_i \}_{i=1}^N \) denotes the initial agent embeddings derived from their historical trajectories \( X_{1:N} \). The parameters \( \Theta, W_\mathrm{p}^{(l)}, W_\mathrm{i}^{(l)}, W_\mathrm{t}^{(l)} \) are learnable weight matrices.

After multiple iterations of interaction-aware message propagation, we obtain the final interaction feature $\mathcal{F}_\mathrm{i}$ via a decoding layer:
\begin{equation}
\mathcal{F}_\mathrm{i} = \phi_{\mathrm{MLP}}(H_\mathrm{p}, H_\mathrm{i}, H_\mathrm{t}).
\end{equation}

\subsection{Trajectory Generation}
Given the interaction features $\mathcal{F}_\mathrm{i}$ capturing agent dynamics, our objective is to decode multi-modal future trajectories. We propose a framework addressing two critical factors: dynamic agent interactions and static road topology. The framework employs $K$ parallel trajectory generators (corresponding to distinct driving intents) with three core components:
\subsubsection{Static Encoder}
The road topology imposes constraints on feasible maneuvers by shaping the agent’s spatial decision space. To incorporate these critical static semantics into the prediction process, we begin by initializing the static query $\mathcal{Q}_\mathrm{s}$ with historical trajectories. Meanwhile, a graph-based encoder extracts the road feature $\mathcal{F}_\mathrm{r}$ from the HD map. These components are then fused through an attention module $\phi_{\mathrm{A}}$:
\begin{equation}
\mathcal{Q}_\mathrm{s} = \phi_{\mathrm{A}}(\mathcal{Q'} = \mathcal{Q}_\mathrm{s}, \mathcal{K'} = \mathcal{F}_\mathrm{r} + PE(\mathcal{F}_\mathrm{r}), \mathcal{V'} = \mathcal{F}_\mathrm{r}),
\end{equation}
where $PE(\cdot)$ is a positional encoding function.

To further enhance contextual reasoning along both temporal directions, the output is subsequently refined by a bi-directional Mamba block $\phi_{\mathrm{BM}}$~\cite{zhang2024decoupling}:
\begin{equation}
\mathcal{Q}_\mathrm{s} = \phi_{\mathrm{BM}}(\mathcal{Q}_\mathrm{s}).
\end{equation}
\subsubsection{Dynamic Encoder}
Inter-agent dynamics govern the evolution of traffic scenes by creating time-varying interaction patterns. To capture these essential motion dependencies, we first initialize the dynamic query $\mathcal{Q}_\mathrm{d}$ with interaction features. An attention module $\phi_{\mathrm{A}}$ is then applied to capture the relational structure among interacting agents:
\begin{equation}
\mathcal{Q}_\mathrm{d} = \phi_{\mathrm{A}}(\mathcal{Q'} = \mathcal{Q}_\mathrm{d}, \mathcal{K'} = \mathcal{F}_\mathrm{i} + PE(\mathcal{F}_\mathrm{i}), \mathcal{V'} = \mathcal{F}_\mathrm{i}).
\end{equation}

Subsequently, to reinforce temporal consistency and refine the motion context over time, we apply an additional attention module over the updated query:
\begin{equation}
\mathcal{Q}_\mathrm{d} = \phi_{\mathrm{A}}(\mathcal{Q'} = \mathcal{Q}_\mathrm{d}, \mathcal{K'} = \mathcal{Q}_\mathrm{d} + PE(\mathcal{Q}_\mathrm{d}), \mathcal{V'} = \mathcal{Q}_\mathrm{d}).
\end{equation}

\subsubsection{Hybrid Decoder}
The decoder synthesizes static topological constraints and dynamic interaction patterns through a multi-stage fusion process. We first generate hybrid query $\mathcal{Q}_\mathrm{h}$ via attention module:
\begin{equation}
\label{hybrid1}
\mathcal{Q}_\mathrm{h} = \phi_{\mathrm{A}}\left(
\mathcal{Q'} = \mathcal{Q}_\mathrm{d}, 
\mathcal{K'} = \mathcal{Q}_\mathrm{s}, 
\mathcal{V'} = \mathcal{Q}_\mathrm{s}
\right).
\end{equation}

To further refine this hybrid representation and ensure spatio-temporal coherence, we perform:
\begin{equation}
\label{hybrid2}
\mathcal{Q}_\mathrm{h} = \phi_{\mathrm{A}}\left(
\mathcal{Q'} = \mathcal{Q}_\mathrm{h}, 
\mathcal{K'} = \mathcal{Q}_\mathrm{h}, 
\mathcal{V'} = \mathcal{Q}_\mathrm{h}
\right).
\end{equation}

Then, we aggregate all contextual signals into a unified mode feature $\mathcal{F}_\mathrm{m}$ using a bi-directional Mamba block:
\begin{equation}
\label{hybrid3}
\mathcal{F}_\mathrm{m} = \phi_{\mathrm{BM}}\left(\mathcal{Q}_\mathrm{h} \oplus \mathcal{Q}_\mathrm{s} \oplus \mathcal{Q}_\mathrm{d}\right).
\end{equation}

This feature encodes motion semantics across modalities and temporal scales. The final trajectory prediction $\hat{Y}$ is obtained by passing $\mathcal{F}_\mathrm{m}$ through an MLP decoder.

\section{Experiment}\label{Experiment}
\subsection{Experimental Setups}

\subsubsection{Datasets}
To evaluate the effectiveness of SWIFT across diverse traffic scenarios, we conduct experiments on three real-world datasets: nuScenes~\cite{caesar2020nuscenes}, MoCAD~\cite{liao2024physics}, and NGSIM~\cite{us2008ngsim}, which collectively cover a broad range of driving environments. From the perspective of traffic regimes, these datasets exhibit substantially different and highly imbalanced distributions. In NGSIM, free flow accounts for 90.48\% of all samples, while synchronized and congested flow constitute only 3.98\% and 5.54\%, respectively. In nuScenes, synchronized flow is the prevailing regime (72.55\%), while free flow and congested flow account for 10.69\% and 16.76\%. MoCAD is dominated by congested traffic: 63.15\% of all scenes fall into the congested flow, while 31.27\% correspond to synchronized flow, and only 5.58\% represent free flow.

\subsubsection{Metrics}
As different datasets adopt different evaluation protocols, we follow the conventional accuracy metrics commonly used in each dataset to ensure a fair comparison with state-of-the-art baselines. Specifically, we employ Minimum Average Displacement Error (minADE) and Minimum Final Displacement Error (minFDE)~\cite{wang2025wake} on the nuScenes dataset, while Root Mean Square Error (RMSE)~\cite{zhang2024decoupling} is used for evaluation on the MoCAD and NGSIM datasets.

\begin{table}[t]
\centering
\caption{Performance comparison of various models on the nuScenes dataset. \textbf{Bold} values represent the best performance in each category. ``-'' denotes the missing value.}
\resizebox{\linewidth}{!}
{
\begin{tabular}{ccccccc}
\bottomrule
\rowcolor{gray!15}\text{Model} & Venue & $\text{minADE}_{5}$ & $\text{minADE}_1$ & $\text{minFDE}_1$ \\
\hline
Trajectron++~\cite{salzmann2020trajectron++} &     ECCV'20 & 1.88 & - & 9.52 \\
MultiPath~\cite{chai2020multipath} &   CoRL'20 & 1.44 &  {3.16} & 7.69 \\
PGP~\cite{deo2022multimodal} &   CoRL'21 &  1.30 & - & -  \\
AgentFormer~\cite{yuan2021agentformer} &   ICCV'21 & 1.97 & - & - \\
LaPred~\cite{kim2021lapred} &   CVPR'21 & 1.53 & 3.51 & 8.12 \\
STGM~\cite{zhong2022stgm} &  TITS'22 & - & 3.21 & 9.62\\ 
GoHome~\cite{gilles2022gohome} &   ICRA'22 &  {1.42} & - &  {6.99} \\
ContextVAE~\cite{xu2023context} &   RAL'23 &  1.59 & 3.54 & 8.24 \\
EMSIN~\cite{ren2024emsin} &    TFS'24 & 1.77 & 3.56 & - \\
CoT-Drive~\cite{liao2025cot} & TAI'25 & 1.56& -&- \\
SeFlow~\cite{zhang2024seflow} & ECCV'24  & 1.38 & - & 7.89 \\
WAKE~\cite{wang2025wake} & TPAMI'25  & 1.24 & 3.03 & 7.02\\
 \hline
\textbf{SWIFT} & - & \textbf{1.15} & \textbf{2.89} & \textbf{6.73}\\
\toprule
\end{tabular}}
\label{table:performance_nuscene}
\end{table}

\begin{table}[th]
  \centering
    \caption{Experimental results on MoCAD. Metric: RMSE}
  \setlength{\tabcolsep}{3.5mm}
  \resizebox{0.9\linewidth}{!}{
    \begin{tabular}{cccccc}
    \bottomrule
\rowcolor{gray!15}{Model} & {1\,s} & {2\,s} & {3\,s} & {4\,s} & {5\,s} \\
    \hline
    S-LSTM~\cite{alahi2016social} & 1.73  & 2.46  & 3.39  & 4.01  & 4.93 \\
    S-GAN~\cite{gupta2018social} & 1.69  & 2.25  & 3.30  & 3.89  & 4.69  \\
    CS-LSTM~\cite{deo2018convolutional} & 1.45  & 1.98  & 2.94  & 3.56  & 4.49  \\
    NLS-LSTM~\cite{messaoud2019non} & 0.96  & 1.27  & 2.08  & 2.86  & 3.93\\
     MHA-LSTM~\cite{messaoud2021attention} & 1.25  & 1.48  & 2.57  & 3.22  & 4.20  \\
    CF-LSTM~\cite{xie2021congestion} & 0.72  & 0.91  & 1.73  & 2.59  & 3.44 \\
    STDAN~\cite{chen2022intention} & 0.62  & 0.85  & 1.62  & 2.51  & 3.32  \\
    WSiP~\cite{wang2023wsip} & 0.70  & 0.87  & 1.70  & 2.56  & 3.47  \\
    HLTP~\cite{liao2024cognitive}& 0.55  & 0.76  & 1.44  & 2.39  & 3.27  \\
    BAT~\cite{liao2024bat} &  {0.35}  & 0.74  &  {1.39}  &  {2.19} & {2.88}\\
 \hline
 \textbf{SWIFT} & \textbf{0.33} &  \textbf{0.65} & \textbf{1.19} & \textbf{1.87} & \textbf{2.36} \\
    \toprule
    \end{tabular}%
    }
  \label{mocadresult}%
\end{table}%

\subsubsection{Implementation Details}
The SWIFT model is implemented using the PyTorch framework and trained on four NVIDIA A40 GPUs. Under the map-free experimental setting adopted for the MoCAD and NGSIM datasets, the road features used in the trajectory generation stage are substituted with interaction features.

\subsubsection{Training Details}
The proposed SWIFT is jointly optimized with a composite loss function
\begin{equation}
\label{eq:total-loss}
L = \lambda_\mathrm{a} L_\mathrm{a} + \lambda_\mathrm{b} L_\mathrm{b} + \lambda_\mathrm{c} L_\mathrm{c},
\end{equation}
where $L_\mathrm{a}$, $L_\mathrm{b}$, and $L_\mathrm{c}$ denote the trajectory loss, regime loss, and interaction loss, respectively, and $\lambda_\mathrm{a}$, $\lambda_\mathrm{b}$, $\lambda_\mathrm{c}$ balance the contributions of each term. For nuScenes, we set $\lambda_\mathrm{a}\!=\!1.0$, $\lambda_\mathrm{b}\!=\!0.3$, and $\lambda_\mathrm{c}\!=\!0.3$; for MoCAD and NGSIM, we set $\lambda_\mathrm{a}\!=\!1.0$, $\lambda_\mathrm{b}\!=\!0.5$, and $\lambda_\mathrm{c}\!=\!0.5$. Formal definitions of the three loss functions are provided below.

\noindent\textbf{Trajectory Loss $L_\mathrm{a}$.}
Given the various dataset-specific evaluation protocols, the trajectory loss is defined in a form aligned with conventional metrics for various datasets, enabling fair comparisons with state-of-the-art baselines. Specifically,
\begin{equation}
\label{eq:traj-loss-piecewise}
L_\mathrm{a}^{(d)}=
\begin{cases}
L_\mathrm{ADE}+L_\mathrm{NLL}, & d=\text{nuScenes},\\
L_\mathrm{MSE}+L_\mathrm{NLL}, & d\in\{\text{MoCAD},\text{NGSIM}\}.
\end{cases}
\end{equation}

The losses $L_\mathrm{NLL}$, $L_\mathrm{ADE}$, and $L_\mathrm{MSE}$ are defined as follows. Let $\mathbf{y}^t\!=\!(x^t,y^t)$ be the ground-truth position at time $t$, and $\hat{\mathbf{y}}_{k}^{t}\!=\!(\hat{x}_k^{t},\hat{y}_k^{t})$ the predicted position of the $k$-th trajectory mode with uncertainty scales $\boldsymbol{\mu}_{k}^{t}\!=\!(\mu_{k}^{\mathrm{x},t},\mu_{k}^{\mathrm{y},t})$. Formally,
\begin{equation}
\begin{aligned}
L_\mathrm{NLL}
&=
\frac{1}{T}\sum_{t=1}^{T}
\Big[
\log\!\big(2\boldsymbol{\mu}_{k^\star}^{t}\big)
+
\frac{\big|\mathbf{y}^{t}-\hat{\mathbf{y}}_{k^\star}^{t}\big|}{\boldsymbol{\mu}_{k^\star}^{t}}
\Big]
\\&\quad
-\frac{1}{2}\sum_{k=1}^{K}\tilde{p}_k\log p_k,
\end{aligned}
\end{equation}
where $k^\star=\arg\min_{k} \sum_{t=1}^{T} \big\| \hat{\mathbf{y}}_{k}^{t} - \mathbf{y}^{t} \big\|_2$, $p_k$ denotes the predicted probability of mode $k$, and $\tilde{p}_k$ is a soft target derived from distance-based weighting:
\begin{equation}
\tilde{p}_k \propto 
\exp\!\Big(
-\tfrac{1}{T}\!\sum_{t=1}^{T}\!\|\hat{\mathbf{y}}_{k}^{t}-\mathbf{y}^{t}\|_2
\Big).
\end{equation}

\begin{table}[tbp]
  \centering
  \caption{Experimental results on NGSIM. Metric: RMSE}
  \setlength{\tabcolsep}{3mm}
   \resizebox{0.9\linewidth}{!}{
    \begin{tabular}{cccccc}
    \bottomrule
\rowcolor{gray!15}{Model} & {1\,s} & {2\,s} & {3\,s} & {4\,s} & {5\,s} \\
    \hline
   S-GAN~\cite{gupta2018social} & 0.57  & 1.32  & 2.22  & 3.26  & 4.40 \\
           CS-LSTM~\cite{deo2018convolutional} & 0.61  & 1.27  & 2.09  & 3.10  & 4.37 \\
           MATF-GAN~\cite{zhao2019multi} & 0.66  & 1.34  & 2.08  & 2.97  & 4.13 \\
           NLS-LSTM~\cite{messaoud2019non} & 0.56  & 1.22  & 2.02  & 3.03  & 4.30\\
       WSiP~\cite{wang2023wsip} & 0.56  & 1.23  & 2.05  & 3.08  & 4.34 \\
       CF-LSTM~\cite{xie2021congestion} & 0.55  & 1.10  & 1.78  & 2.73  & 3.82 \\
       MHA-LSTM~\cite{messaoud2021attention} & 0.41  & 1.01  & 1.74  & 2.67  & 3.83 \\
       STDAN~\cite{chen2022intention} & {0.39}  & 0.96  & 1.61  & 2.56 & 3.67 \\
        iNATran~\cite{chen2022vehicle} & {0.39}  &0.96  &1.61  & 2.42  & 3.43  \\
       FHIF~\cite{zuo2023trajectory} &{0.40}  & 0.98  & 1.66  & 2.52  & 3.63\\ 
    DACR-AMTP~\cite{cong2023dacr}& 0.57  & 1.07  & 1.68  & 2.53  & 3.40 \\ 
    BAT~\cite{liao2024bat}& \textbf{0.23} & {0.81}  & {1.54}  & 2.52 &3.62\\
     GaVa~\cite{liao2024human} & {0.40}  & {0.94}  & {1.52}  & {2.24}  & 3.13  \\
     \hline
   \textbf{SWIFT} & 0.34  & \textbf{0.80} &  \textbf{1.23}  & \textbf{ 1.98 } & \textbf{ 2.75 } \\
    \toprule
    \end{tabular}%
    }
  \label{NGSIMresult}%
\end{table}%s

Let $\mathcal{K}$ be the indices of the top-$K$ modes ranked by possibility $p_k$, the $L_\mathrm{ADE}$ is
\begin{equation}
L_\mathrm{ADE}
=
\min_{k\in\mathcal{K}}
\tfrac{1}{T}\sum_{t=1}^{T}
\|\hat{\mathbf{y}}_{k}^{t}-\mathbf{y}^{t}\|_2.
\end{equation}

The $L_\mathrm{MSE}$ penalizes coordinate deviations on the most probable mode. Let $k^\dagger=\arg\max_{k} p_k$ denote the index of the mode with the largest predicted probability. Formally,
\begin{equation}
L_\mathrm{MSE}
=
\frac{1}{T}\sum_{t=1}^{T}
\big\|\hat{\mathbf{y}}_{k^\dagger}^{t}-\mathbf{y}^{t}\big\|_2^{2}.
\end{equation}

\noindent\textbf{Regime Loss $L_\mathrm{b}$.}
Let $\mathcal{R}=\{\mathrm{f},\mathrm{s},\mathrm{c}\}$ denote the free, synchronized, and congested regimes.
The Flow Regime Encoder outputs regime confidence scores $\mathbf{w}=\{w_r\}_{r\in\mathcal{R}}$ and the predicted probabilities are given by
\begin{equation}
\hat p^\mathrm{reg}_{r}
= \frac{\exp(w_r)}{\sum_{r'\in\mathcal{R}}\exp(w_{r'})},
\qquad r\in\mathcal{R}.
\end{equation}

The regime loss $L_\mathrm{b}$ is the cross-entropy
\begin{equation}
L_\mathrm{b}
= - \sum_{r\in\mathcal{R}} y^\mathrm{reg}_{r}\log \hat p^\mathrm{reg}_{r},
\end{equation}
where $\mathbf{y}^\mathrm{reg}=\{y^\mathrm{reg}_{r}\}_{r\in\mathcal{R}}$ is the one-hot ground-truth label.

\noindent\textbf{Interaction Loss $L_\mathrm{c}$}
We define an interaction loss aligning the learned edge set $E_\mathrm{le}$ with the optimized edge set $E_\mathrm{op}$. The interaction loss adopts a binary cross-entropy form:
\begin{equation}
\begin{aligned}
L_\mathrm{c}
&=-\tfrac{1}{N^2}\sum_{i=1}^{N}\sum_{j=1}^{N}
\big[
E_\mathrm{op}(i,j)\log E_\mathrm{le}(i,j)
\\&\qquad
+(1-E_\mathrm{op}(i,j))\log(1-E_\mathrm{le}(i,j))
\big].
\end{aligned}
\end{equation}

To enable end-to-end learning through the binarization in $E_\mathrm{le}$, we adopt the Straight-Through Estimator~\cite{bengio2013estimating}.

\begin{table}[tbp]
\centering
\caption{Sample Efficiency Analysis on the MoCAD.}
\label{tab:sample_efficiency}
  \setlength{\tabcolsep}{3mm}
  \resizebox{0.95\linewidth}{!}{
\begin{tabular}{cccccc}
\bottomrule
\rowcolor{gray!15}{Model} & {1\,s} & {2\,s} & {3\,s} & {4\,s} & {5\,s} \\
\hline
SWIFT           & 0.33 & 0.65 & 1.19 & 1.87 & 2.36 \\
SWIFT (60\%)    & 0.41 & 0.77 & 1.33 & 2.13 & 2.63 \\
Drop (\%)       & 24.2\% & 18.5\% & 11.8\% & 13.9\% & 11.4\% \\
\midrule
HLTP~\cite{liao2024cognitive} & 0.55 & 0.76 & 1.44 & 2.39 & 3.27 \\
HLTP (60\%)     & 0.72 & 1.04 & 1.75 & 2.89 & 4.09 \\
Drop (\%)       & 30.9\% & 36.8\% & 21.5\% & 20.9\% & 25.1\% \\
\toprule
\end{tabular}
}
\end{table}

\begin{table}[t]
\centering
\caption{Robustness comparison on the NGSIM.}
\label{tab:robustness}
  \setlength{\tabcolsep}{3mm}
   \resizebox{0.95\linewidth}{!}{
\begin{tabular}{c|c|ccccc}
\bottomrule
\rowcolor{gray!15} {Noise} & Model &{1\,s} & {2\,s} & {3\,s} & {4\,s} & {5\,s} \\
\hline
\multirow{3}{*}{$\rho =0$} 
   & BAT\cite{liao2024bat} & 0.23 & {0.81}  & {1.54}  & 2.52 &3.62\\
   &  GaVa\cite{liao2024human}  & {0.40}  & {0.94}  & {1.52}  & {2.24}  & 3.13  \\
   &SWIFT & 0.34  & 0.80 &  1.23  &  1.98  &  2.75  \\
\hline
\multirow{3}{*}{$\rho =3$} 
   & BAT~\cite{liao2024bat}      & 0.28 & 0.90 & 1.65 & 2.68 & 3.82 \\
   & GaVa\cite{liao2024human}      & 0.45 & 1.05 & 1.64 & 2.43 & 3.39 \\
   & SWIFT     & 0.37 & 0.87 & 1.32 & 2.05 & 2.83 \\
\hline
\multirow{3}{*}{$\rho =6$} 
   & BAT~\cite{liao2024bat}      & 0.45 & 1.15 &  2.18 & 3.32 & 4.34 \\
   & GaVa\cite{liao2024human}    & 0.53 & 1.28 & 2.00 & 3.10 & 4.05 \\
   & SWIFT     & 0.42 & 0.97 & 1.48 & 2.26 & 3.12 \\
\toprule
\end{tabular}}
\end{table}

\subsection{Experimental Results}
\subsubsection{Predictive Accuracy} We summarize all quantitative results in Tables \ref{table:performance_nuscene}, \ref{mocadresult}, and \ref{NGSIMresult}, corresponding to the nuScenes, MoCAD, and NGSIM datasets, respectively. On the nuScenes dataset, SWIFT achieves substantial improvements over the best baseline WAKE. Specifically, it reduces $\text{minADE}_5$ from 1.24 to 1.15 (7.2\% improvement), $\text{minADE}_1$ from 3.03 to 2.89 (4.6\% improvement), and $\text{minFDE}_1$ from 7.02 to 6.73 (4.1\% improvement). On the MoCAD dataset, which features dense and heterogeneous urban/campus traffic, SWIFT consistently outperforms all prior methods across all prediction horizons. Compared to the previous best model BAT, SWIFT achieves an 18.1\% reduction in RMSE at 5\,s, along with 5.7\%, 12.2\%, 14.4\%, and 14.6\% improvements at 1\,s, 2\,s, 3\,s, and 4\,s, respectively. On the NGSIM dataset, which reflects structured, high-speed highway scenarios, SWIFT delivers particularly strong long-term prediction performance. While BAT achieves the best 1-second RMSE (0.23), SWIFT outperforms all baselines from 2\,s onward. For instance, at 3 seconds, RMSE drops from 1.52 (GaVa) to 1.23 (19.1\% improvement), and at 5\,s, from 3.13 to 2.75 (12.1\% improvement). Collectively, these results confirm SWIFT's suitability as a unified trajectory prediction solution across diverse real-world driving contexts.

\begin{table}[t]
\centering
\caption{Cross-location generalization on the NGSIM dataset (train on I-80, test on US-101). Reported values are RMSE over different prediction horizons.}
\label{tab:crossdatasetI80}
 \setlength{\tabcolsep}{3.5mm}
\resizebox{0.95\linewidth}{!}{
\begin{tabular}{cccccc}
\bottomrule
\rowcolor{gray!15}{Model} &{1\,s} &{2\,s} &{3\,s} &{4\,s} &{5\,s} \\
\hline
STDAN~\cite{chen2022intention} & 0.41 & 1.05 & 1.90 & 2.99 & 4.37\\
GaVa~\cite{liao2024human}     & 0.44 & 1.02 & 1.87 & 2.69 & 3.81 \\
\textbf{SWIFT}                & 0.36 & 0.92 & 1.40 & 2.24 & 3.21 \\
\toprule
\end{tabular}}
\end{table}

\begin{table}[t]
\centering
\caption{Cross-location generalization on the NGSIM dataset (train on US-101, test on I-80). Reported values are RMSE over different prediction horizons.}
\label{tab:crossdatasetUS101}
 \setlength{\tabcolsep}{3.5mm}
\resizebox{0.95\linewidth}{!}{
\begin{tabular}{cccccc}
\bottomrule
\rowcolor{gray!15}{Model} &{1\,s} &{2\,s} &{3\,s} &{4\,s} &{5\,s} \\
\hline
STDAN~\cite{chen2022intention} & 0.48 & 1.13 & 1.93 & 2.94 & 4.22\\
GaVa~\cite{liao2024human}     & 0.45 & 1.09 & 1.90 & 2.63 & 3.77 \\
 \textbf{SWIFT}                & 0.35 & 0.93 & 1.37 & 2.21 & 3.18 \\
\hline
\end{tabular}}
\end{table}

\subsubsection{Sample Efficiency}
The sample efficiency of models determines their practical viability given the high costs of real-world data collection. To evaluate sample efficiency, we train models on a 60\% subset of the MoCAD dataset and present their performance on the test set in Table~\ref{tab:sample_efficiency}. These models include SWIFT and HLTP~\cite{liao2024cognitive}. 
As illustrated in Table~\ref{tab:sample_efficiency}, SWIFT (60\%) maintains strong performance with only moderate degradation. Across prediction horizons from 1 to 5\,s, SWIFT exhibits relative RMSE increases ranging from 11.4\% to 24.2\%. In contrast, HLTP suffers substantially larger performance drops, with relative increases between 20.9\% and 36.8\%. Notably, beyond the 2\,s prediction horizon, SWIFT's degradation consistently remains below 20\%, highlighting its superior robustness under limited data conditions. For example, at a 5\,s horizon, SWIFT trained with 60\% of the data incurs only an 11.4\% increase in RMSE compared to its full-data counterpart, whereas HLTP exhibits a considerable increase of 25.1\%.
These findings collectively support our hypothesis that the structural priors embedded in the SWIFT introduce effective inductive biases, thereby reducing the model’s dependence on large-scale training data.

\begin{table}[t]
  \centering
  \caption{Ablation study of components for trajectory prediction accuracy on the nuScenes dataset. IR, SE, DE, and HD refer to the Interaction Reasoning module, Static Encoder, Dynamic Encoder, and Hybrid Decoder.}
  \label{tab:interaction_ablation}
    \setlength{\tabcolsep}{2mm}
  \resizebox{0.95\linewidth}{!}{
  \begin{tabular}{cccc|ccc}
  \rowcolor{gray!15} 
  \bottomrule
    {IR} & {SE} & {DE} & {HD} & {minADE}$_5$ & {minADE}$_1$ & {minFDE}$_1$ \\
    \hline
    \checkmark &  & \checkmark & \checkmark & 1.49 & 3.32 & 7.90 \\
    \checkmark & \checkmark &        &   \checkmark & 1.30 & 3.26 & 7.58 \\
    \checkmark &        &        & \checkmark & 1.61 & 3.53 & 8.43 \\
     & \checkmark & \checkmark &   \checkmark     & 1.27 & 3.15 & 7.37 \\
    \checkmark & \checkmark & \checkmark & \checkmark & \textbf{1.15} & \textbf{2.89} & \textbf{6.73} \\
    \hline
  \end{tabular}}
\end{table}

\begin{table}[t]
  \centering
  \caption{Ablation study of fusion strategy in the hybrid decoder on the nuScenes dataset.}
  \label{tab:ablation_fusion}
    \setlength{\tabcolsep}{3.7mm}
  \resizebox{0.95\linewidth}{!}{
  \begin{tabular}{c|ccc}
    \bottomrule
  \rowcolor{gray!15} {Strategy} & {minADE}$_5$ & {minADE}$_1$ & {minFDE}$_1$ \\
    \hline
    Add & 1.29 & 3.18 & 7.39 \\
    Concatenation & 1.27 & 3.23 & 7.49 \\
    Attention & \textbf{1.15} & \textbf{2.89} & \textbf{6.73} \\
    \toprule
  \end{tabular}}
\end{table}

\begin{table}[t]
  \centering
  \caption{Ablation study for robustness and sample efficiency on the nuScenes dataset. FRE and SWIN refer to the Flow Regime Encoder and Small-world Interaction Network.}
  \label{tab:struct_setting_summary}
      \setlength{\tabcolsep}{0.5mm}
  \resizebox{0.95\linewidth}{!}{
  \begin{tabular}{cc|cc|ccc}
    \bottomrule
      \rowcolor{gray!15} {FRE} & {SWIN} & {Train (\%)} & {Noise ($\rho$)} & {minADE}$_5$& {minADE}$_1$ &{minFDE}$_1$ \\
    \hline
               &                  & 60 & 0 & 1.35 & 3.37 & 7.79 \\
               & \checkmark       & 60 & 0 & 1.27 & 3.23 & 7.54 \\
    \checkmark & \checkmark       & 60 & 0 & 1.23 & 3.15 & 7.46 \\
    
               &                  & 100 & 6 & 1.61 & 3.50 & 8.36 \\
               &                  & 100 & 3 & 1.39 & 3.34 & 7.59 \\
    \checkmark & \checkmark       & 100 & 6 & 1.31 & 3.18 & 7.26 \\
    
    \checkmark & \checkmark       & 100 & 0 & 1.15 & 2.89 & 6.73 \\
    \toprule
  \end{tabular}
  }
\end{table}

\begin{table}[t]
  \centering
  \caption{Ablation study of exponent $\varphi$ on the NGSIM.}
  \label{tab:connectioncost}
   \setlength{\tabcolsep}{4.3mm}
  \resizebox{0.95\linewidth}{!}{
  \begin{tabular}{c|ccccc}
    \bottomrule
   \rowcolor{gray!15}   $\varphi$  & {1\,s} & {2\,s} & {3\,s} & {4\,s} & {5\,s} \\
    \hline
    1 & \textbf{0.31} & 0.85 & 1.34 & 2.04 & 2.84 \\
    2 & 0.36 & 0.85 & 1.32 & 1.99 & 2.81 \\
    3 & 0.34  & \textbf{0.80} &  \textbf{1.23}  & \textbf{ 1.98 } & \textbf{ 2.75 } \\
    \toprule
  \end{tabular}}
\end{table}

\subsubsection{Robustness}
In real-world deployments, trajectory data are often corrupted by noise due to sensor inaccuracies and other external factors, whereas training datasets are typically noise-free. This mismatch can lead to a significant drop in prediction performance. Therefore, evaluating a model’s robustness to input noise is critical. To this end, we inject controlled noise into the observed trajectories, where the noise intensity is governed by a scalar parameter $\rho$. We evaluate model performance on the NGSIM dataset under three levels of observation noise, with the results summarized in Table~\ref{tab:robustness}. As illustrated in Table~\ref{tab:robustness}, the performance of GaVa and BAT deteriorates more noticeably under increasing noise levels, whereas SWIFT maintains relatively stable accuracy. As $\rho$ increases from 0 to 6, the performance gap between SWIFT and the baselines widens, particularly at longer prediction horizons. At the 5\,s horizon under severe noise, SWIFT achieves an RMSE of 3.12, compared to 4.05 for GaVa and 4.34 for BAT, corresponding to relative improvements of 23.0\% and 39.1\%, respectively. Even under mild noise ($\rho = 3$), SWIFT still outperforms GaVa and BAT by 16.5\% and 25.9\%. Crucially, SWIFT also exhibits significantly less performance degradation across noise levels. From $\rho = 0$ to $\rho = 6$, its RMSE at the 5-second horizon increases by only 13.5\%, whereas GaVa and BAT suffer larger increases of 29.4\% and 19.9\%, respectively. These results validate the robustness of SWIFT under highly corrupted observations.

{
\subsubsection{Generalization}
The ability of trajectory prediction models to generalize to unseen scenarios is crucial for real-world deployment. To assess generalization performance, we conduct cross-location experiments on the NGSIM dataset~\cite{us2008ngsim}, which provides vehicle trajectories from two distinct freeway segments, US-101 and I-80. Models are trained on one location and evaluated on the other, yielding two transfer directions: I-80 $\rightarrow$ US-101 and US-101 $\rightarrow$ I-80. This cross-location protocol naturally removes temporal correlation between the training and test sets, enabling a more reliable evaluation of generalization performance.
As reported in Table~\ref{tab:crossdatasetI80}, when transferring from I-80 to US-101, SWIFT consistently achieves the lowest RMSE across all prediction horizons. In particular, at the 5\,s horizon, SWIFT attains an RMSE of 3.21, outperforming GaVa (3.81) and STDAN (4.37) by 15.8\% and 26.6\%, respectively. A similar trend is observed in the opposite direction (Table~\ref{tab:crossdatasetUS101}), where SWIFT yields an RMSE of 3.18 at 5\,s, compared with 3.77 for GaVa and 4.22 for STDAN, corresponding to relative reductions of 15.6\% and 24.6\%. These results demonstrate that SWIFT maintains superior predictive accuracy under spatial and temporal distribution shifts, highlighting its generalization performance.

}

\subsection{Ablation Studies}
\subsubsection{Ablation Study of Prediction Accuracy}
To quantify the contribution of each core component in our model, we conduct ablation experiments on the nuScenes dataset, focusing on the Interaction Reasoning (IR) module, Static Encoder (SE), Dynamic Encoder (DE), and the Hybrid Decoder (HD).
Table~\ref{tab:interaction_ablation} reports the comparison results. Removing any of the modules leads to noticeable performance degradation, demonstrating their contributions. Specifically, disabling the IR module, thereby simplifying interaction modeling to pairwise reasoning, causes minFDE\(_1\) to rise from 6.73 to 7.37, a 9.5\% increase, indicating the importance of capturing higher-order relational patterns. Excluding the DE results in a significant drop in accuracy, with minADE\(_1\) increasing by 12.8\%, highlighting the crucial role of motion dynamics. The absence of the SE, which removes access to HD map context, increases minADE\(_5\) by 29.5\%, reflecting the value of spatial priors in guiding multi-modal predictions. These results validate the complementary nature of the modules and underscore their collective importance for accurate trajectory prediction.

\begin{table}[t]
  \centering
  \caption{Ablation study of loss weights on the MoCAD.}
  \label{tab:lossweights_abalation}
 \setlength{\tabcolsep}{3mm}
  \resizebox{0.95\linewidth}{!}{
  \begin{tabular}{ccc|ccccc}
    \bottomrule
    \rowcolor{gray!15} $\lambda_\mathrm{a}$ & $\lambda_\mathrm{b}$ & $\lambda_\mathrm{c}$ &{1\,s} & {2\,s} & {3\,s} & {4\,s} & {5\,s} \\
    \hline
    1.0 & 0.1  & 0.1 & \textbf{0.30} & 0.68 & \textbf{1.19} & 1.91 & 2.47 \\
    0.5 & 0.5  & 0.5 & 0.38 & 0.72 & 1.25 & 2.11 & 2.62 \\
    1.0 & 0.5  & 0.5 & 0.33 &  \textbf{0.65} & \textbf{1.19} & \textbf{1.87} & \textbf{2.36} \\
    \toprule
  \end{tabular}}
\end{table}

\begin{table}[t]
  \centering
  \caption{Optimal range of interaction parameters $\alpha$, $\beta$, and $\gamma$ across distinct traffic regimes.}
  \label{tab:flowadapt}
 \setlength{\tabcolsep}{4.5mm}
  \resizebox{0.95\linewidth}{!}{
  \begin{tabular}{c|ccc}
    \bottomrule
         \rowcolor{gray!15} Regime  & $\alpha$ & $\beta$ & $\gamma$   \\
    \hline
    Free  Flow & [0.4-0.5] & [0.5-0.6] & [0.6-0.7]  \\
    Sync. Flow & [0.7-0.8] & [0.7-0.8] & [0.3-0.4]  \\
    Cong. Flow & [0.8-0.9] & [0.4-0.5] & [0.4-0.5]  \\
    \toprule
  \end{tabular}}
\end{table}

{
\subsubsection{Ablation Study of Fusion Strategy}
To assess the effectiveness of the attention-based fusion strategy in the hybrid decoder, we conduct an ablation study by replacing the attention-based fusion strategy with two fusion alternatives: (1) Concatenation, which concatenates the static query $\mathcal{Q}_\mathrm{s}$ and the dynamic query $\mathcal{Q}_\mathrm{d}$, and (2) Add, which performs element-wise addition of the static and dynamic queries. The results are summarized in Table~\ref{tab:ablation_fusion}.
Compared with both alternatives, the attention-based fusion consistently achieves superior performance across all metrics, with minADE\(_1\) and minFDE\(_1\) reduced by 10.5\% and 10.1\%, respectively, over concatenation. 
Overall, these results demonstrate that the proposed attention-based hybrid decoder effectively balances static and dynamic contexts, leading to more accurate trajectory predictions.

}

\subsubsection{Ablation Study of Structural Priors}
To assess the impact of structural priors on model robustness and sample efficiency, we conduct ablation experiments on the nuScenes dataset. Specifically, we evaluate the contributions of the Flow Regime Encoder (FRE) and the Small-World Interaction Network (SWIN) under two challenging conditions: limited training data (60\% of the full set) and noisy input trajectories with different noise levels (\(\rho = 3, 6\)).
As shown in Table~\ref{tab:struct_setting_summary}, removing both FRE and SWIN results in the weakest performance across all conditions. Under limited data (60\%), adding SWIN alone reduces minADE\(_5\) from 1.35 to 1.27 (a 5.9\% improvement), while incorporating FRE brings it further down to 1.23 (an additional 3.1\% gain). Similarly, minFDE\(_1\) improves from 7.79 to 7.46, indicating better prediction fidelity even with reduced supervision.
Under strong input noise (\(\rho = 6\)), the full model achieves a 15.1\% reduction in minFDE\(_1\) (from 8.36 to 7.26) and a 10.1\% drop in minADE\(_1\) (from 3.50 to 3.18), showing increased robustness. 
These results confirm that both FRE and SWIN contribute meaningfully and complementarily to performance. FRE introduces global traffic condition awareness, while SWIN facilitates long-range interaction modeling. Together, they significantly enhance the model’s generalization, noise resilience, and sample efficiency.

{
\subsubsection{Ablation Study of Connection-Cost Exponent $\varphi$}
\label{connectioncost}
To assess the sensitivity of the structural optimization to the connection-cost exponent $\varphi$, we vary $\varphi \in \{1,2,3\}$ in Eq.~\eqref{costfunction} on NGSIM. As shown in Table~\ref{tab:connectioncost}, $\varphi\!=\!1$ attains the lowest error at 1\,s prediction horizon, but its performance degrades for longer horizons, being 3.2\% worse than $\varphi\!=\!3$ at 5\,s. Overall, $\varphi\!=\!3$ achieves the best performance.

}

{
\subsubsection{Ablation Study of Loss Weights}
To rigorously evaluate the contribution and robustness of each component in the composite loss function Eq.\eqref{eq:total-loss}, we vary the loss weights $\lambda_\mathrm{a}$, $\lambda_\mathrm{b}$, and $\lambda_\mathrm{c}$ on the MoCAD dataset. 
As summarized in Table~\ref{tab:lossweights_abalation}, the configuration ($\lambda_\mathrm{a}\!=\!1.0$, $\lambda_\mathrm{b}\!=\!0.5$, $\lambda_\mathrm{c}\!=\!0.5$) achieves the best performance across most prediction horizons. 
When the contributions of the regime and interaction losses are increased relative to the trajectory loss, i.e., in the configuration ($\lambda_\mathrm{a}\!=\!0.5$, $\lambda_\mathrm{b}\!=\!0.5$, $\lambda_\mathrm{c}\!=\!0.5$), performance degrades noticeably, with the 5\,s prediction error rising by 11.0\%. 
Conversely, when the trajectory loss is overly emphasized ($\lambda_\mathrm{a}\!=\!1.0$, $\lambda_\mathrm{b}\!=\!0.1$, $\lambda_\mathrm{c}\!=\!0.1$), the model shows a slight advantage at short horizons but suffers a 4.6\% drop in accuracy at 5\,s. 
These results demonstrate that SWIFT benefits from the complementary regularization of all three loss terms, leading to stable and accurate trajectory prediction.

}

\begin{figure}[t]
    \centering
    \includegraphics[width=0.96\linewidth]{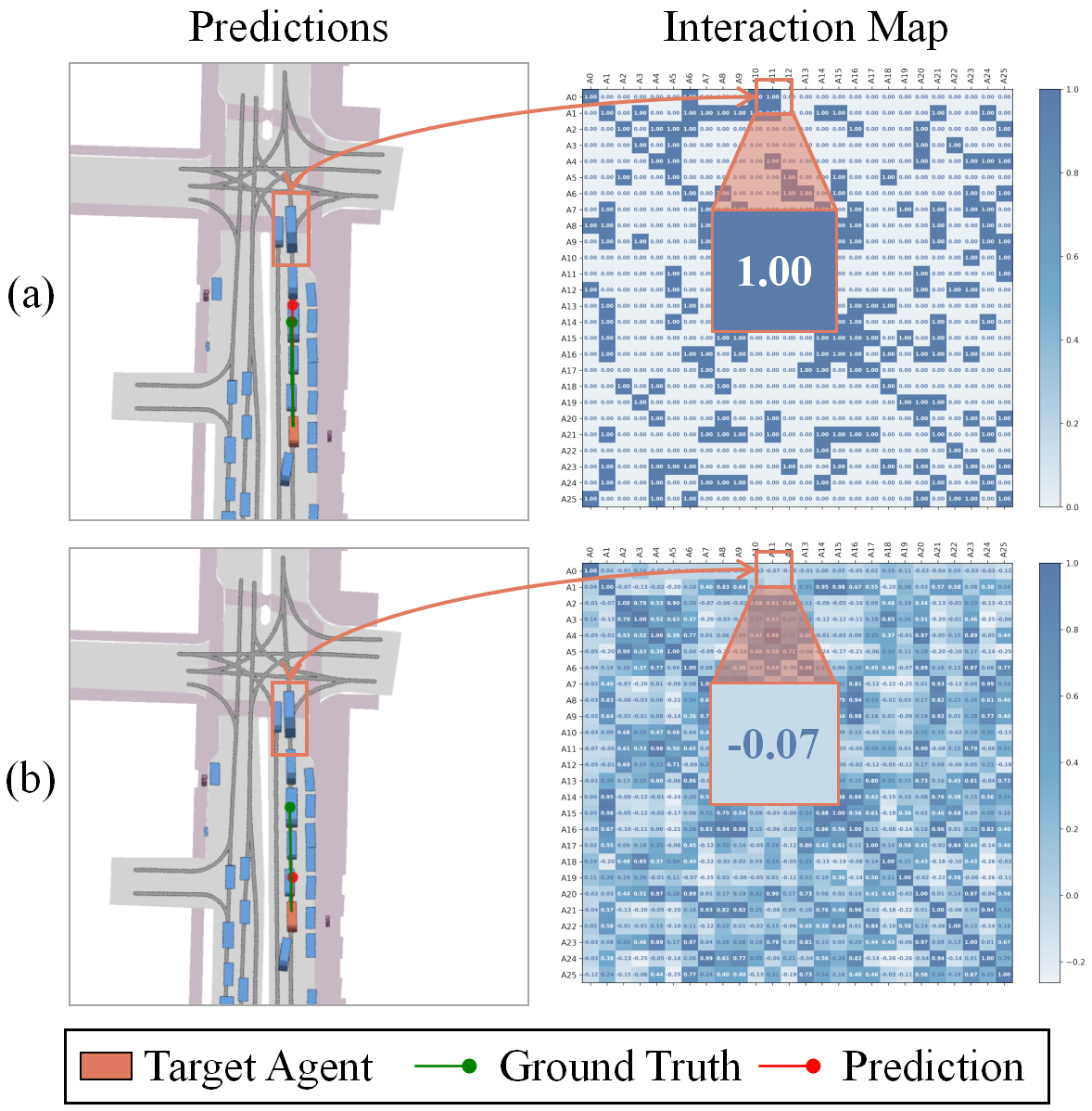}
    \caption{Qualitative comparison of trajectory prediction and interaction modeling between SWIFT (a) and SWIFT* (b) in a complex intersection scenario. In (a), SWIFT assigns high attention to a distant accelerating vehicle (labeled with an orange box), leveraging it as a global motion cue and accurately predicting the target agent’s forward motion. In (b), SWIFT* fails to capture this global dependency and produces an overly conservative prediction. This case highlights the critical role of modeling global interaction in trajectory prediction.}
    \label{fig:global_case}
\end{figure}

\begin{figure*}[t]
        \centering
	\includegraphics[width=0.83\linewidth]{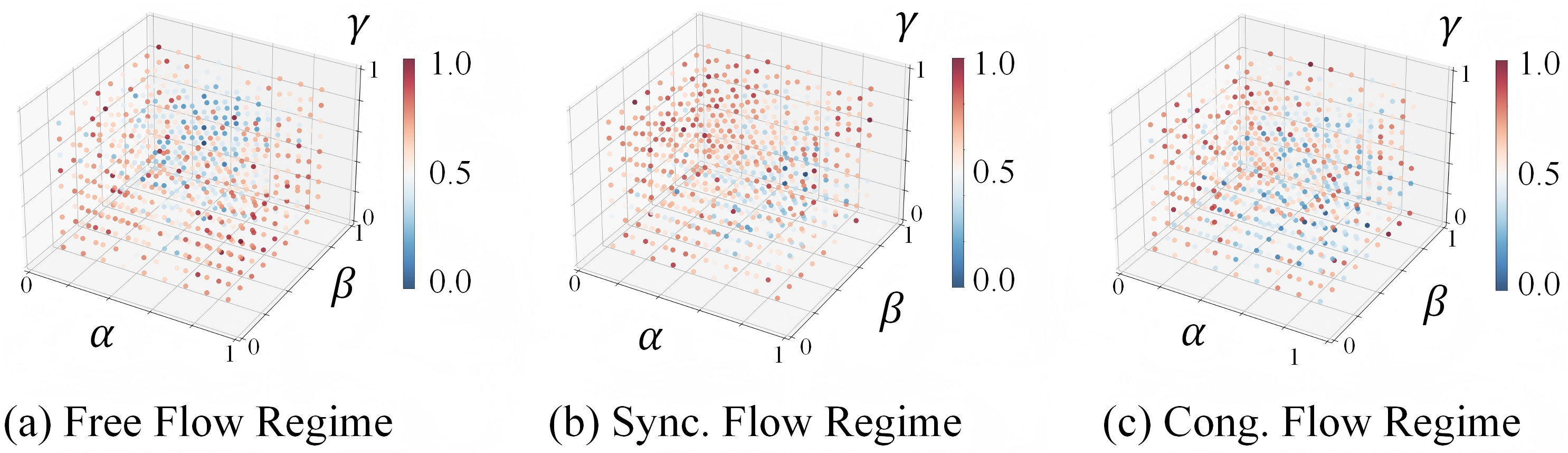}
	\caption{Grid search results of the interaction parameters $(\alpha, \beta, \gamma)$ across three traffic regimes. Each point denotes one parameter combination, and color indicates the normalized prediction error ($\text{minADE}_1$), with blue representing lower prediction error. Notably, the low-error (blue) points exhibit markedly different distribution patterns across regimes.}
    \label{flowcase}
\end{figure*}

\begin{figure}[t]
    \centering
    \includegraphics[width=\linewidth]{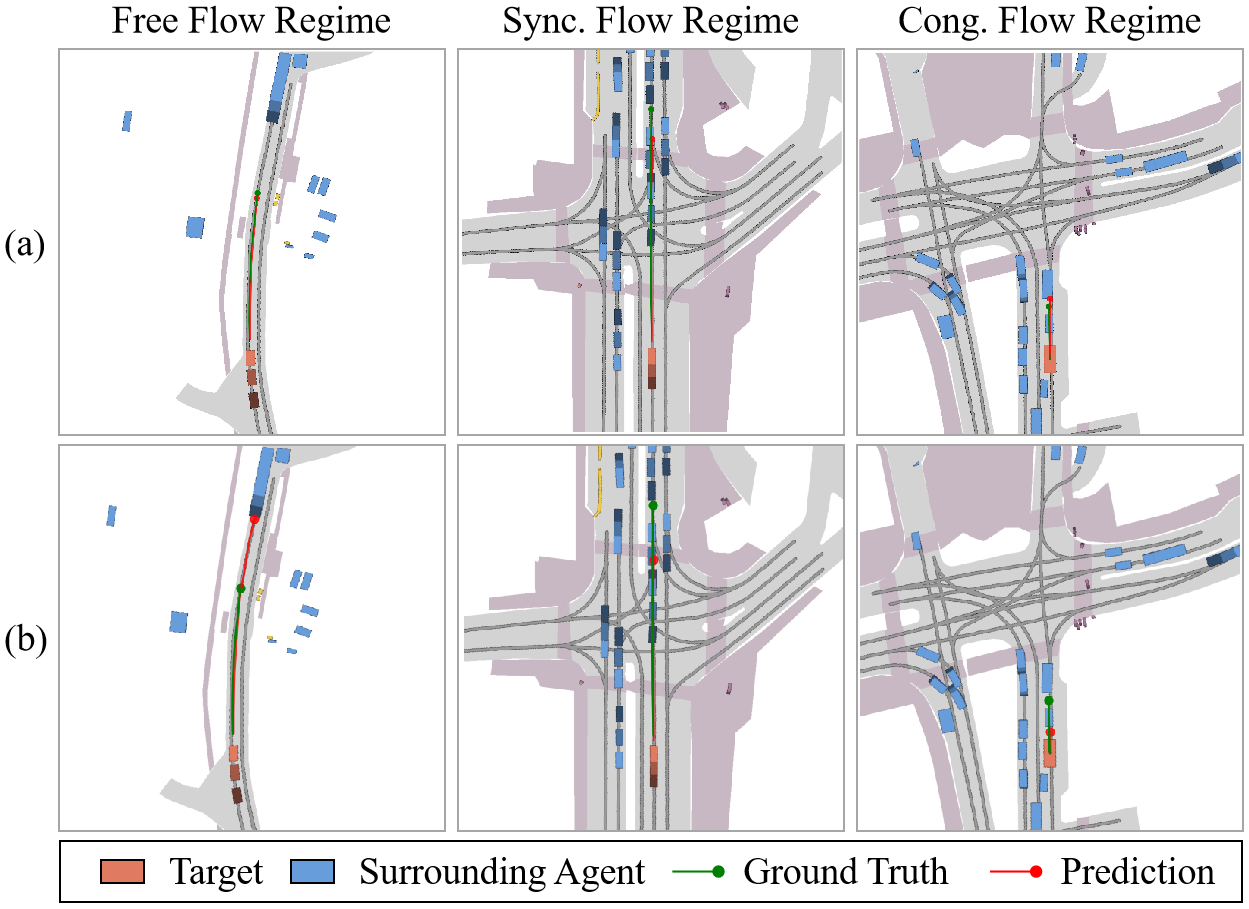}
    \caption{
    Qualitative comparison between SWIFT and its ablated variant SWIFT* across three distinct traffic flow regimes: free flow, synchronized flow, and congested flow. As illustrated, SWIFT provides more accurate and context-sensitive predictions across all three regimes.
    }
    \label{case2}
\end{figure}

\begin{figure}[t]
    \centering
    \includegraphics[width=\linewidth]{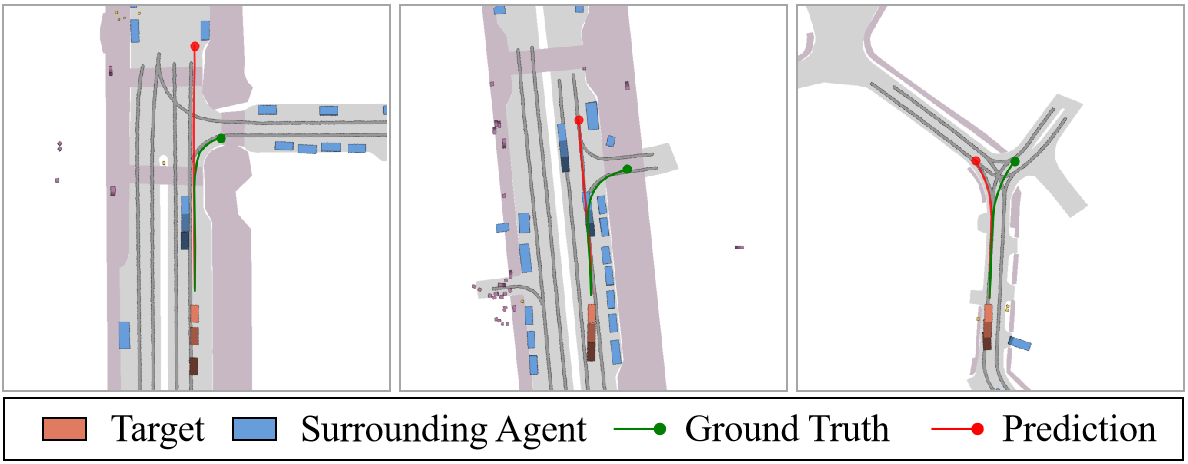}
    \caption{Failure cases of SWIFT. These failures arise when the target agent executes late maneuvers, such as unexpected lane changes or turns. While SWIFT captures early motion cues well, it struggles to anticipate shifts in long-term intentions that are not observable from immediate history.}
    \label{Failure}
\end{figure}

\subsection{Case Studies}

\subsubsection{Global Interaction}
\label{sec:varphi_ablation}
To qualitatively assess the impact of global interaction modeling, we compare the predicted trajectories and interaction maps produced by SWIFT and SWIFT*, an ablated variant, in a representative intersection scenario. In the SWIFT*, the interaction map is constructed solely based on agent-level similarity, thereby removing the small-world interaction network and flow regime encoder. As shown in Fig.~\ref{fig:global_case}, the target agent is initially stationary behind two other agents, while a leading agent farther ahead has begun to accelerate. Although this leading agent is not in close spatial proximity, its motion is a key global cue that signals the likely onset of forward movement. SWIFT accurately anticipates this behavior by assigning high interaction weights to the distant agent, resulting in a predicted trajectory that closely matches the ground truth. In contrast, SWIFT*, constrained to local interactions, fails to capture the anticipatory signal and produces a significantly shorter prediction. This case illustrates the importance of incorporating global dependencies for reliable prediction in traffic environments.

{
\subsubsection{Flow Regime Adaptation}
To investigate how the proposed SWIFT framework adapts its interaction mechanism under distinct traffic conditions, we perform a flow regime-wise analysis on the nuScenes dataset. For each regime, we randomly sample 100 representative scenes with a regime confidence score greater than 0.6, ensuring reliable classification of traffic regimes. A grid search is conducted over the interaction parameters $(\alpha, \beta, \gamma)$ within the range of $[0.1, 0.9]$ with a step size of $0.1$, and the normalized $\mathrm{minADE}_1$ is computed for each parameter configuration.
Fig.~\ref{flowcase} and Table~\ref{tab:flowadapt} jointly summarize the qualitative and quantitative results, revealing clear regime-dependent patterns. In the free flow regime, the optimal interaction threshold $\gamma$ lies in a higher range, indicating that vehicle motion is less influenced by neighboring agents and that interactions are relatively sparse. In contrast, the synchronized flow regime is characterized by a lower optimal $\gamma$ and comparable local and global weights, suggesting that both local and global interactions frequently coexist. In the congested flow regime, the $\alpha$ value implies that, as vehicle speeds decrease and spacing becomes minimal, agents primarily respond to their immediate neighbors.
Together, these analyses underscore the necessity of incorporating flow-aware adaptation in trajectory prediction models, as interaction patterns inherently depend on the traffic regime.

In addition to the grid search experiment, we conduct a qualitative comparison between SWIFT and an ablated variant, denoted SWIFT*, across three representative scenes corresponding to free flow, synchronized flow, and congested flow. SWIFT* removes the Flow Regime Encoder, resulting in a static interaction graph that does not respond to macroscopic traffic states. As shown in Fig.~\ref{case2}, SWIFT yields trajectory predictions that are more accurate and more consistent with the prevailing regime, whereas SWIFT* often appears conservative in free flow and misaligned in synchronized and congested conditions, reflecting the limitations of a fixed interaction topology. These comparison further highlights the effectiveness of the proposed Flow Regime Encoder.

}

\subsubsection{Failure Scenarios}
While SWIFT demonstrates strong capability in capturing motion dynamics and modeling interactions, it may still fail in scenarios where the future behavior of a driver involves abrupt or delayed intention changes. As shown in Fig.~\ref{Failure}, the predicted trajectories closely follow the ground truth in the early stages but gradually diverge when the target vehicle performs late merges, lane changes, or turns. These deviations reflect the inherent difficulty of inferring future intention shifts from limited current observations. This underscores the need for autonomous driving systems to perform continuous, real-time prediction updates in order to adapt to evolving driver behaviors and reduce the impact of latent intention uncertainty in complex traffic scenarios.

\section{Conclusion}\label{Conclusion}
In this work, we present SWIFT, a unified framework for trajectory prediction that integrates structural priors from Small-World Networks and Traffic Flow Theory. Motivated by the limitations of existing approaches, particularly their reliance on local, pairwise interactions and their lack of scene-level flow awareness, SWIFT rethinks the modeling of inter-agent interactions through a structured, flow-adaptive lens. Extensive experiments across three real-world datasets, including nuScenes, MoCAD, and NGSIM, demonstrate that SWIFT consistently outperforms state-of-the-art methods in terms of prediction accuracy, generalization to unseen scenarios, and robustness under noisy inputs. Comprehensive ablation and case studies reinforce the effectiveness of structurally informed, flow-adaptive interaction modeling. These results highlight the importance of integrating structural priors into trajectory prediction models for improved robustness, adaptability, and interpretability. Nonetheless, our work remains challenged by abrupt intention shifts that are not observable from recent motion cues, suggesting future extensions toward long-horizon intent reasoning. Overall, this work highlights the importance of structural inductive biases in interaction modeling and outlines a promising direction for robust and generalizable trajectory prediction in complex traffic environments.

\bibliographystyle{IEEEtran}
% argument is your BibTeX string definitions and bibliography database(s)
%\bibliography{IEEEabrv,../bib/paper}

%\bibliographystyle{mybstfile}
\bibliography{mybibfile}

% Generated by IEEEtran.bst, version: 1.14 (2015/08/26)
\begin{thebibliography}{10}
\providecommand{\url}[1]{#1}
\csname url@samestyle\endcsname
\providecommand{\newblock}{\relax}
\providecommand{\bibinfo}[2]{#2}
\providecommand{\BIBentrySTDinterwordspacing}{\spaceskip=0pt\relax}
\providecommand{\BIBentryALTinterwordstretchfactor}{4}
\providecommand{\BIBentryALTinterwordspacing}{\spaceskip=\fontdimen2\font plus
\BIBentryALTinterwordstretchfactor\fontdimen3\font minus \fontdimen4\font\relax}
\providecommand{\BIBforeignlanguage}[2]{{%
\expandafter\ifx\csname l@#1\endcsname\relax
\typeout{** WARNING: IEEEtran.bst: No hyphenation pattern has been}%
\typeout{** loaded for the language `#1'. Using the pattern for}%
\typeout{** the default language instead.}%
\else
\language=\csname l@#1\endcsname
\fi
#2}}
\providecommand{\BIBdecl}{\relax}
\BIBdecl

\bibitem{marchetti2024smemo}
F.~Marchetti, F.~Becattini, L.~Seidenari, and A.~Del~Bimbo, ``Smemo: social memory for trajectory forecasting,'' \emph{IEEE Transactions on Pattern Analysis and Machine Intelligence}, vol.~46, no.~6, pp. 4410--4425, 2024.

\bibitem{sun2023modality}
J.~Sun, Y.~Li, L.~Chai, and C.~Lu, ``Modality exploration, retrieval and adaptation for trajectory prediction,'' \emph{IEEE Transactions on Pattern Analysis and Machine Intelligence}, vol.~45, no.~12, pp. 15\,051--15\,064, 2023.

\bibitem{zhang2024decoupling}
B.~Zhang, N.~Song, and L.~Zhang, ``Decoupling motion forecasting into directional intentions and dynamic states,'' \emph{NIPS}, 2024.

\bibitem{liu2025end}
C.~Liu, J.~Xie, F.~Chang, T.~Li, S.~Li, and Y.~Wang, ``End-point drive and reverse enhanced decoding-based traffic participants trajectory prediction under bird’s eye view,'' \emph{IEEE Transactions on Intelligent Transportation Systems}, 2025.

\bibitem{liao2025sa}
H.~Liao, Z.~Li, K.~Zhu, K.~Li, and C.~Xu, ``Sa-tp: A safety-aware trajectory prediction and planning model for autonomous driving,'' \emph{IEEE Transactions on Robotics}, 2025.

\bibitem{du2024social}
Q.~Du, X.~Wang, S.~Yin, L.~Li, and H.~Ning, ``Social force embedded mixed graph convolutional network for multi-class trajectory prediction,'' \emph{IEEE Transactions on Intelligent Vehicles}, 2024.

\bibitem{mo2024heterogeneous}
X.~Mo, Y.~Xing, and C.~Lv, ``Heterogeneous graph social pooling for interaction-aware vehicle trajectory prediction,'' \emph{Transportation Research Part E: Logistics and Transportation Review}, vol. 191, p. 103748, 2024.

\bibitem{ghavasieh2024diversity}
A.~Ghavasieh and M.~De~Domenico, ``Diversity of information pathways drives sparsity in real-world networks,'' \emph{Nature Physics}, vol.~20, no.~3, pp. 512--519, 2024.

\bibitem{kerner2009introduction}
B.~S. Kerner, \emph{Introduction to modern traffic flow theory and control: the long road to three-phase traffic theory}.\hskip 1em plus 0.5em minus 0.4em\relax Springer Science \& Business Media, 2009.

\bibitem{kerner2004three}
------, ``Three-phase traffic theory and highway capacity,'' \emph{Physica A: Statistical Mechanics and its Applications}, vol. 333, pp. 379--440, 2004.

\bibitem{liu2024attention}
Y.~Liu, B.~Li, X.~Wang, C.~Sammut, and L.~Yao, ``Attention-aware social graph transformer networks for stochastic trajectory prediction,'' \emph{IEEE Transactions on Knowledge and Data Engineering}, 2024.

\bibitem{chen2022intention}
X.~Chen, H.~Zhang, F.~Zhao, Y.~Hu, C.~Tan, and J.~Yang, ``Intention-aware vehicle trajectory prediction based on spatial-temporal dynamic attention network for internet of vehicles,'' \emph{IEEE Transactions on Intelligent Transportation Systems}, vol.~23, no.~10, pp. 19\,471--19\,483, 2022.

\bibitem{helbing2012social}
D.~Helbing, \emph{Social self-organization: Agent-based simulations and experiments to study emergent social behavior}.\hskip 1em plus 0.5em minus 0.4em\relax Springer, 2012.

\bibitem{chen2025dstigcn}
W.~Chen, H.~Sang, J.~Wang, and Z.~Zhao, ``Dstigcn: Deformable spatial-temporal interaction graph convolution network for pedestrian trajectory prediction,'' \emph{IEEE Transactions on Intelligent Transportation Systems}, 2025.

\bibitem{westny2023mtp}
T.~Westny, J.~Oskarsson, B.~Olofsson, and E.~Frisk, ``Mtp-go: Graph-based probabilistic multi-agent trajectory prediction with neural odes,'' \emph{IEEE Transactions on Intelligent Vehicles}, 2023.

\bibitem{simulation2007us}
G.~SIMulation, ``Us highway 101 dataset,'' 2007.

\bibitem{lan2024hi}
Z.~Lan, Y.~Ren, H.~Yu, L.~Liu, Z.~Li, Y.~Wang, and Z.~Cui, ``Hi-scl: Fighting long-tailed challenges in trajectory prediction with hierarchical wave-semantic contrastive learning,'' \emph{Transportation Research Part C: Emerging Technologies}, vol. 165, p. 104735, 2024.

\bibitem{liao2024physics}
H.~Liao, C.~Wang, Z.~Li, Y.~Li, B.~Wang, G.~Li, and C.~Xu, ``Physics-informed trajectory prediction for autonomous driving under missing observation,'' \emph{International Joint Conference On Artificial Intelligence}, 2024.

\bibitem{alahi2016social}
A.~Alahi, K.~Goel, V.~Ramanathan, A.~Robicquet, L.~Fei-Fei, and S.~Savarese, ``Social lstm: Human trajectory prediction in crowded spaces,'' in \emph{Proceedings of the IEEE conference on computer vision and pattern recognition}, 2016, pp. 961--971.

\bibitem{liao2024cognitive}
H.~Liao, Y.~Li, Z.~Li, C.~Wang, Z.~Cui, S.~E. Li, and C.~Xu, ``A cognitive-based trajectory prediction approach for autonomous driving,'' \emph{IEEE Transactions on Intelligent Vehicles}, 2024.

\bibitem{deo2022multimodal}
N.~Deo, E.~Wolff, and O.~Beijbom, ``Multimodal trajectory prediction conditioned on lane-graph traversals,'' in \emph{Conference on Robot Learning}.\hskip 1em plus 0.5em minus 0.4em\relax PMLR, 2022, pp. 203--212.

\bibitem{chen2024q}
J.~Chen, Z.~Wang, J.~Wang, and B.~Cai, ``Q-eanet: Implicit social modeling for trajectory prediction via experience-anchored queries,'' \emph{IET Intelligent Transport Systems}, vol.~18, no.~6, pp. 1004--1015, 2024.

\bibitem{liao2025toward}
H.~Liao, Z.~Li, G.~Zhang, K.~Li, and C.~Xu, ``Toward human-like trajectory prediction for autonomous driving: A behavior-centric approach,'' \emph{Transportation Science}, 2025.

\bibitem{liang2021nettraj}
Y.~Liang and Z.~Zhao, ``Nettraj: A network-based vehicle trajectory prediction model with directional representation and spatiotemporal attention mechanisms,'' \emph{IEEE Transactions on Intelligent Transportation Systems}, vol.~23, no.~9, pp. 14\,470--14\,481, 2021.

\bibitem{watts1998collective}
D.~J. Watts and S.~H. Strogatz, ``Collective dynamics of ‘small-world’networks,'' \emph{nature}, vol. 393, no. 6684, pp. 440--442, 1998.

\bibitem{lee2024cultural}
Y.~Lee, ``Cultural and opinion dynamics in small-world “social” networks,'' \emph{The Journal of Mathematical Sociology}, pp. 1--26, 2024.

\bibitem{kleinberg2000navigation}
J.~M. Kleinberg, ``Navigation in a small world,'' \emph{Nature}, vol. 406, no. 6798, pp. 845--845, 2000.

\bibitem{zhao2024electric}
A.~P. Zhao, S.~Li, Z.~Li, Z.~Wang, X.~Fei, Z.~Hu, M.~Alhazmi, X.~Yan, C.~Wu, S.~Lu \emph{et~al.}, ``Electric vehicle charging planning: A complex systems perspective,'' \emph{IEEE Transactions on Smart Grid}, 2024.

\bibitem{wu2011shockwave}
X.~Wu and H.~X. Liu, ``A shockwave profile model for traffic flow on congested urban arterials,'' \emph{Transportation Research Part B: Methodological}, vol.~45, no.~10, pp. 1768--1786, 2011.

\bibitem{wang2025nest}
C.~Wang, H.~Liao, B.~Wang, Y.~Guan, B.~Rao, Z.~Pu, Z.~Cui, C.-Z. Xu, and Z.~Li, ``Nest: A neuromodulated small-world hypergraph trajectory prediction model for autonomous driving,'' in \emph{Proceedings of the AAAI Conference on Artificial Intelligence}, vol.~39, no.~1, 2025, pp. 808--816.

\bibitem{liao2024bat}
H.~Liao, Z.~Li, H.~Shen, W.~Zeng, D.~Liao, G.~Li, and C.~Xu, ``Bat: Behavior-aware human-like trajectory prediction for autonomous driving,'' in \emph{Proceedings of the AAAI Conference on Artificial Intelligence}, vol.~38, 2024, pp. 10\,332--10\,340.

\bibitem{zhou2022hivt}
Z.~Zhou, L.~Ye, J.~Wang, K.~Wu, and K.~Lu, ``Hivt: Hierarchical vector transformer for multi-agent motion prediction,'' in \emph{Proceedings of the IEEE/CVF Conference on Computer Vision and Pattern Recognition}, 2022, pp. 8823--8833.

\bibitem{alamdari2023small}
S.~Alamdari, ``Small-world formation via local information,'' \emph{arXiv preprint arXiv:2301.00849}, 2023.

\bibitem{kerner1999physics}
B.~S. Kerner, ``The physics of traffic,'' \emph{Physics world}, vol.~12, no.~8, p.~25, 1999.

\bibitem{bando1995dynamical}
M.~Bando, K.~Hasebe, A.~Nakayama, A.~Shibata, and Y.~Sugiyama, ``Dynamical model of traffic congestion and numerical simulation,'' \emph{Physical review E}, vol.~51, no.~2, p. 1035, 1995.

\bibitem{bando1998analysis}
M.~Bando, K.~Hasebe, K.~Nakanishi, and A.~Nakayama, ``Analysis of optimal velocity model with explicit delay,'' \emph{Physical Review E}, vol.~58, no.~5, p. 5429, 1998.

\bibitem{punzo2016speed}
V.~Punzo and M.~Montanino, ``Speed or spacing? cumulative variables, and convolution of model errors and time in traffic flow models validation and calibration,'' \emph{Transportation Research Part B: Methodological}, vol.~91, pp. 21--33, 2016.

\bibitem{heckbert1995fourier}
P.~Heckbert, ``Fourier transforms and the fast fourier transform (fft) algorithm,'' \emph{Computer Graphics}, vol.~2, no. 1995, pp. 15--463, 1995.

\bibitem{wilson2011car}
R.~E. Wilson and J.~A. Ward, ``Car-following models: fifty years of linear stability analysis--a mathematical perspective,'' \emph{Transportation Planning and Technology}, vol.~34, no.~1, pp. 3--18, 2011.

\bibitem{wilson2008mechanisms}
R.~E. Wilson, ``Mechanisms for spatio-temporal pattern formation in highway traffic models,'' \emph{Philosophical Transactions of the Royal Society A: Mathematical, Physical and Engineering Sciences}, vol. 366, no. 1872, pp. 2017--2032, 2008.

\bibitem{helbing1998coherent}
D.~Helbing and B.~A. Huberman, ``Coherent moving states in highway traffic,'' \emph{Nature}, vol. 396, no. 6713, pp. 738--740, 1998.

\bibitem{kelkar2007extension}
S.~Kelkar, L.~Grigsby, and J.~Langsner, ``An extension of parseval's theorem and its use in calculating transient energy in the frequency domain,'' \emph{IEEE Transactions on Industrial Electronics}, no.~1, pp. 42--45, 2007.

\bibitem{tong2020directed}
Z.~Tong, Y.~Liang, C.~Sun, D.~S. Rosenblum, and A.~Lim, ``Directed graph convolutional network,'' \emph{arXiv preprint arXiv:2004.13970}, 2020.

\bibitem{caesar2020nuscenes}
H.~Caesar, V.~Bankiti, A.~H. Lang, S.~Vora, V.~E. Liong, Q.~Xu, A.~Krishnan, Y.~Pan, G.~Baldan, and O.~Beijbom, ``nuscenes: A multimodal dataset for autonomous driving,'' in \emph{Proceedings of the IEEE/CVF conference on computer vision and pattern recognition}, 2020, pp. 11\,621--11\,631.

\bibitem{us2008ngsim}
U.~D. of~Transportation, ``Ngsim—next generation simulation,'' 2008.

\bibitem{wang2025wake}
C.~Wang, H.~Liao, Z.~Li, and C.~Xu, ``Wake: Towards robust and physically feasible trajectory prediction for autonomous vehicles with wavelet and kinematics synergy,'' \emph{IEEE Transactions on Pattern Analysis and Machine Intelligence}, 2025.

\bibitem{salzmann2020trajectron++}
T.~Salzmann, B.~Ivanovic, P.~Chakravarty, and M.~Pavone, ``Trajectron++: Dynamically-feasible trajectory forecasting with heterogeneous data,'' in \emph{Computer Vision--ECCV 2020: 16th European Conference, Glasgow, UK, August 23--28, 2020, Proceedings, Part XVIII 16}.\hskip 1em plus 0.5em minus 0.4em\relax Springer, 2020, pp. 683--700.

\bibitem{chai2020multipath}
Y.~Chai, B.~Sapp, M.~Bansal, and D.~Anguelov, ``Multipath: Multiple probabilistic anchor trajectory hypotheses for behavior prediction,'' in \emph{Conference on Robot Learning}.\hskip 1em plus 0.5em minus 0.4em\relax PMLR, 2020, pp. 86--99.

\bibitem{yuan2021agentformer}
Y.~Yuan, X.~Weng, Y.~Ou, and K.~Kitani, ``Agentformer: Agent-aware transformers for socio-temporal multi-agent forecasting,'' 2021.

\bibitem{kim2021lapred}
B.~Kim, S.~H. Park, S.~Lee, E.~Khoshimjonov, D.~Kum, J.~Kim, J.~S. Kim, and J.~W. Choi, ``Lapred: Lane-aware prediction of multi-modal future trajectories of dynamic agents,'' in \emph{CVPR}, 2021, pp. 14\,636--14\,645.

\bibitem{zhong2022stgm}
Z.~Zhong, Y.~Luo, and W.~Liang, ``Stgm: Vehicle trajectory prediction based on generative model for spatial-temporal features,'' \emph{IEEE Transactions on Intelligent Transportation Systems}, vol.~23, no.~10, pp. 18\,785--18\,793, 2022.

\bibitem{gilles2022gohome}
T.~Gilles, S.~Sabatini, D.~Tsishkou, B.~Stanciulescu, and F.~Moutarde, ``Gohome: Graph-oriented heatmap output for future motion estimation,'' in \emph{ICRA}.\hskip 1em plus 0.5em minus 0.4em\relax IEEE, 2022, pp. 9107--9114.

\bibitem{xu2023context}
P.~Xu, J.-B. Hayet, and I.~Karamouzas, ``Context-aware timewise vaes for real-time vehicle trajectory prediction,'' \emph{IEEE Robotics and Automation Letters}, 2023.

\bibitem{ren2024emsin}
Y.~Ren, Z.~Lan, L.~Liu, and H.~Yu, ``Emsin: Enhanced multi-stream interaction network for vehicle trajectory prediction,'' \emph{IEEE Transactions on Fuzzy Systems}, 2024.

\bibitem{liao2025cot}
H.~Liao, H.~Kong, B.~Wang, C.~Wang, W.~Ye, Z.~He, C.~Xu, and Z.~Li, ``Cot-drive: Efficient motion forecasting for autonomous driving with llms and chain-of-thought prompting,'' \emph{IEEE Transactions on Artificial Intelligence}, 2025.

\bibitem{zhang2024seflow}
Q.~Zhang, Y.~Yang, P.~Li, O.~Andersson, and P.~Jensfelt, ``Seflow: A self-supervised scene flow method in autonomous driving,'' in \emph{European Conference on Computer Vision}.\hskip 1em plus 0.5em minus 0.4em\relax Springer, 2024, pp. 353--369.

\bibitem{gupta2018social}
A.~Gupta, J.~Johnson, L.~Fei-Fei, S.~Savarese, and A.~Alahi, ``Social gan: Socially acceptable trajectories with generative adversarial networks,'' in \emph{CVPR}, 2018, pp. 2255--2264.

\bibitem{deo2018convolutional}
N.~Deo and M.~M. Trivedi, ``Convolutional social pooling for vehicle trajectory prediction,'' in \emph{Proceedings of the IEEE conference on computer vision and pattern recognition workshops}, 2018, pp. 1468--1476.

\bibitem{messaoud2019non}
K.~Messaoud, I.~Yahiaoui, A.~Verroust-Blondet, and F.~Nashashibi, ``Non-local social pooling for vehicle trajectory prediction,'' in \emph{2019 IEEE Intelligent Vehicles Symposium (IV)}.\hskip 1em plus 0.5em minus 0.4em\relax IEEE, 2019, pp. 975--980.

\bibitem{messaoud2021attention}
------, ``Attention based vehicle trajectory prediction,'' \emph{IEEE Transactions on Intelligent Vehicles}, vol.~6, no.~1, pp. 175--185, 2021.

\bibitem{xie2021congestion}
X.~Xie, C.~Zhang, Y.~Zhu, Y.~N. Wu, and S.-C. Zhu, ``Congestion-aware multi-agent trajectory prediction for collision avoidance,'' in \emph{ICRA}, 2021.

\bibitem{wang2023wsip}
R.~Wang, S.~Wang, H.~Yan, and X.~Wang, ``Wsip: wave superposition inspired pooling for dynamic interactions-aware trajectory prediction,'' in \emph{AAAI}, vol.~37, 2023, pp. 4685--4692.

\bibitem{zhao2019multi}
T.~Zhao, Y.~Xu, M.~Monfort, W.~Choi, C.~Baker, Y.~Zhao, Y.~Wang, and Y.~N. Wu, ``Multi-agent tensor fusion for contextual trajectory prediction,'' in \emph{Proceedings of the IEEE/CVF Conference on Computer Vision and Pattern Recognition}, 2019, pp. 12\,126--12\,134.

\bibitem{chen2022vehicle}
X.~Chen, H.~Zhang, F.~Zhao, Y.~Cai, H.~Wang, and Q.~Ye, ``Vehicle trajectory prediction based on intention-aware non-autoregressive transformer with multi-attention learning for internet of vehicles,'' \emph{IEEE Transactions on Instrumentation and Measurement}, vol.~71, pp. 1--12, 2022.

\bibitem{zuo2023trajectory}
Z.~Zuo, X.~Wang, S.~Guo, Z.~Liu, Z.~Li, and Y.~Wang, ``Trajectory prediction network of autonomous vehicles with fusion of historical interactive features,'' \emph{IEEE Transactions on Intelligent Vehicles}, 2023.

\bibitem{cong2023dacr}
P.~Cong, Y.~Xiao, X.~Wan, M.~Deng, J.~Li, and X.~Zhang, ``Dacr-amtp: Adaptive multi-modal vehicle trajectory prediction for dynamic drivable areas based on collision risk,'' \emph{IEEE Transactions on Intelligent Vehicles}, 2023.

\bibitem{liao2024human}
H.~Liao, S.~Liu, Y.~Li, Z.~Li, C.~Wang, Y.~Li, S.~E. Li, and C.~Xu, ``Human observation-inspired trajectory prediction for autonomous driving in mixed-autonomy traffic environments,'' in \emph{ICRA}.\hskip 1em plus 0.5em minus 0.4em\relax IEEE, 2024, pp. 14\,212--14\,219.

\bibitem{bengio2013estimating}
Y.~Bengio, N.~L{\'e}onard, and A.~Courville, ``Estimating or propagating gradients through stochastic neurons for conditional computation,'' \emph{arXiv preprint arXiv:1308.3432}, 2013.

\end{thebibliography}

%
% <OR> manually copy in the resultant .bbl file
% set second argument of \begin to the number of references
% (used to reserve space for the reference number labels box)
%\begin{thebibliography}{1}

%\bibitem{IEEEhowto:kopka}
%H.~Kopka and P.~W. Daly, \emph{A Guide to \LaTeX}, 3rd~ed.\hskip 1em plus
%  0.5em minus 0.4em\relax Harlow, England: Addison-Wesley, 1999.

%\end{thebibliography}

% biography section
% 
% If you have an EPS/PDF photo (graphicx package needed) extra braces are
% needed around the contents of the optional argument to biography to prevent
% the LaTeX parser from getting confused when it sees the complicated
% \includegraphics command within an optional argument. (You could create
% your own custom macro containing the \includegraphics command to make things
% simpler here.)
%\begin{IEEEbiography}[{\includegraphics[width=1in,height=1.25in,clip,keepaspectratio]{mshell}}]{Michael Shell}
% or if you just want to reserve a space for a photo:

\begin{IEEEbiography}
[{\includegraphics[width=1in,height=1.25in, clip,keepaspectratio]{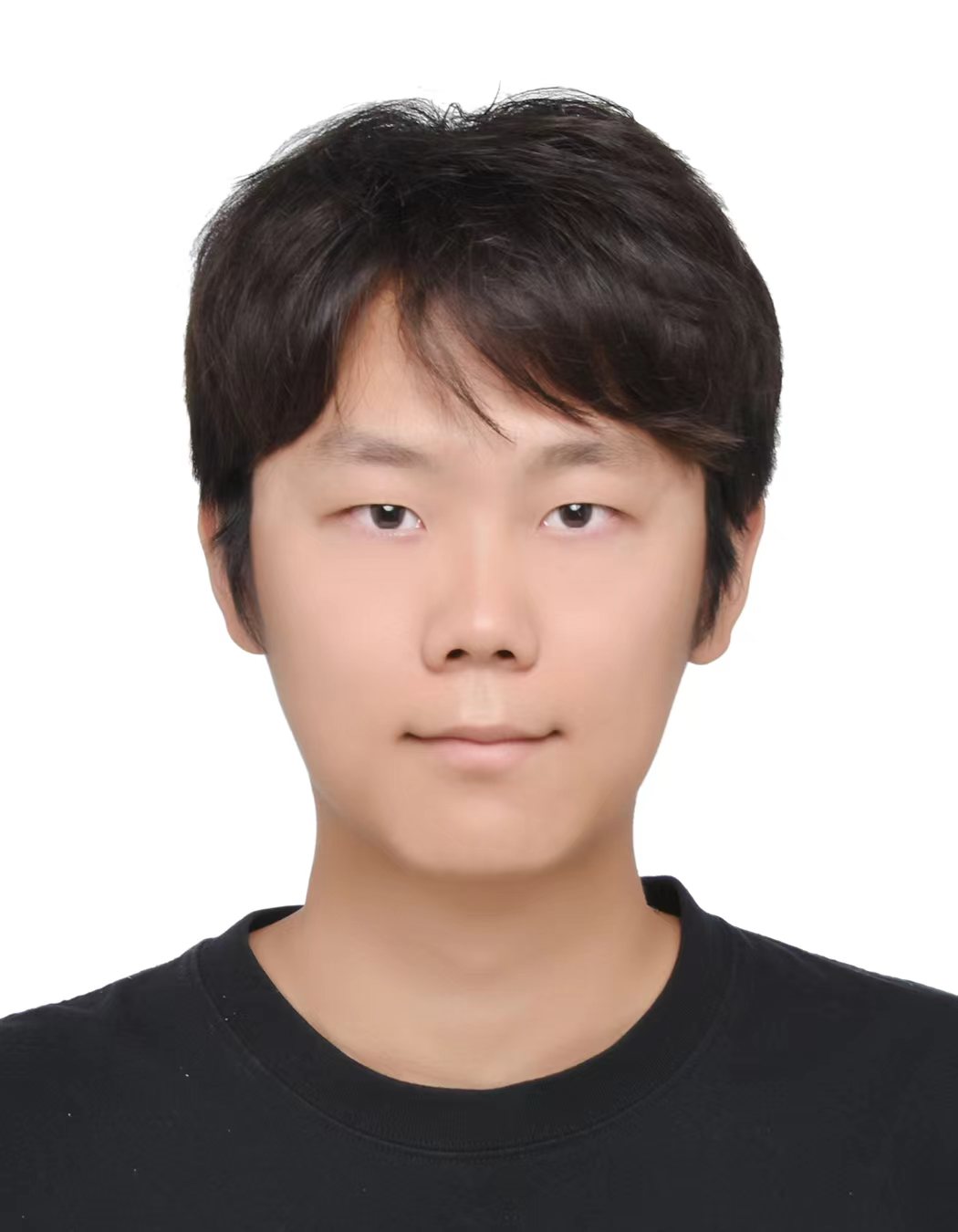}}]{Chengyue Wang} is currently pursuing a Ph.D. degree at the State Key Laboratory of Internet of Things for Smart City and the Department of Civil Engineering, University of Macau. He received his M.S. degree from the University of Illinois Urbana-Champaign (UIUC) in 2022. He received his B.E. degree from Chang'an University in 2021. His research interests include autonomous driving and intelligent transportation systems.
\end{IEEEbiography}

\begin{IEEEbiography}
[{\includegraphics[width=1in,height=1.25in, clip,keepaspectratio]{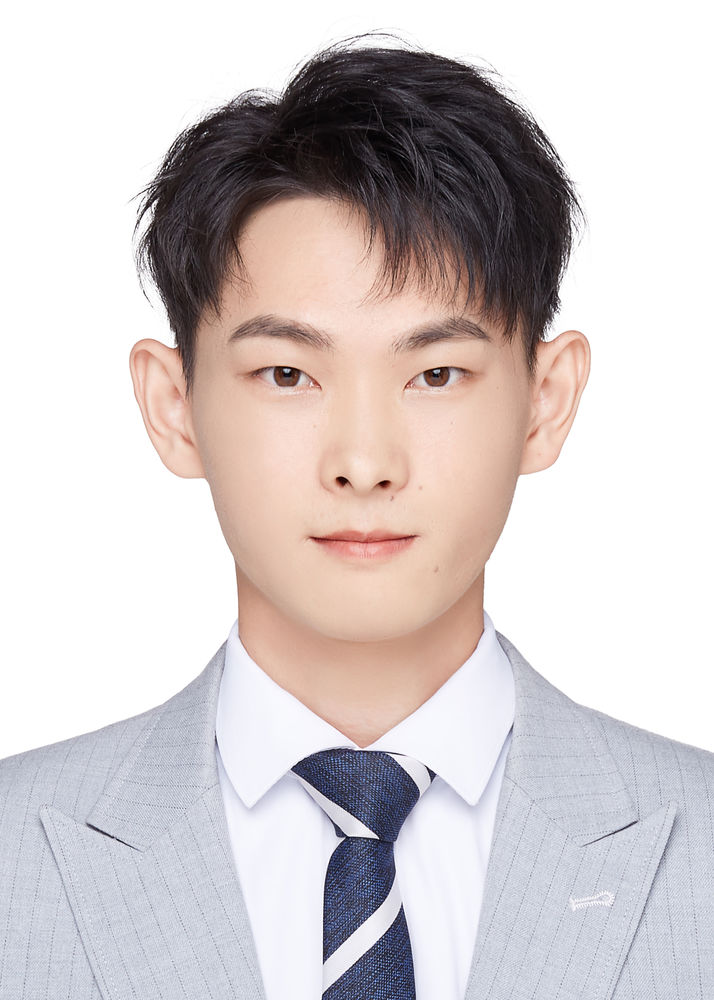}}]{Bin Rao} received the B.S. and M.S. degrees in traffic engineering from the South China University of Technology, Guangzhou, China, in 2021 and 2024. He is currently working as a research assistant at the State Key Laboratory of Internet of Things for Smart City, University of Macau. His research interests include transportation network modeling, time series prediction, autonomous driving trajectory prediction and planning, and accident anticipation.
\end{IEEEbiography}

\begin{IEEEbiography}
[{\includegraphics[width=1in,height=1.25in, clip,keepaspectratio]{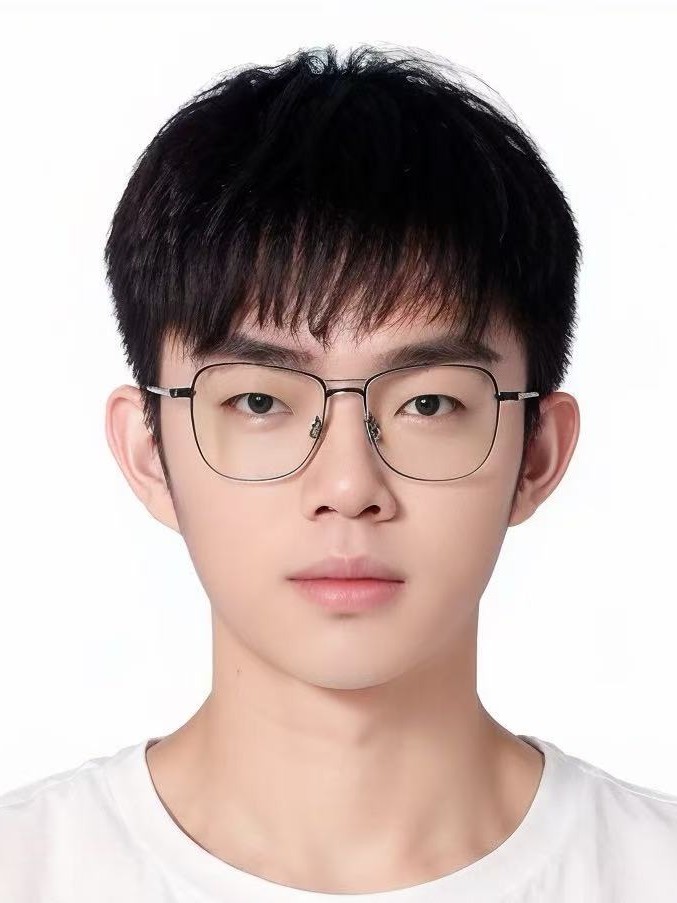}}]{Haicheng Liao} (Student Member, IEEE) received the B.S. degree in software engineering from the University of Electronic Science and Technology of China (UESTC) in 2022. He is currently pursuing the Ph.D. degree at the State Key Laboratory of Internet of Things for Smart City and the Department of Computer and Information Science, University of Macau. Over his academic career, he has published over 20 papers. His research interests include connected autonomous vehicles and the application of deep reinforcement learning to autonomous driving.
\end{IEEEbiography}

\begin{IEEEbiography}
[{\includegraphics[width=1in,height=1.25in, clip,keepaspectratio]{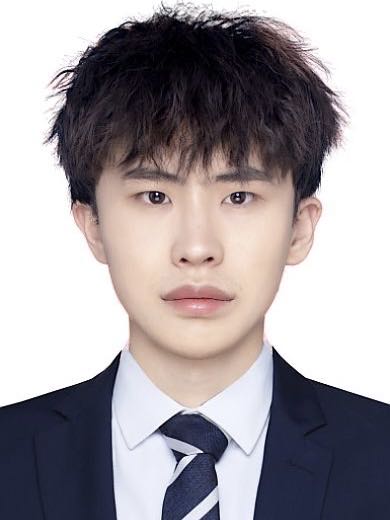}}]{Bonan Wang} is currently pursuing an M.S. degree at the State Key Laboratory of Internet of Things for Smart City and the Department of Computer and Information Science, University of Macau. He received a B.S. degree in Data Science and Big Data Technology from Shaanxi University of Science \& Technology in 2023. His research interests include trajectory prediction and planning for autonomous driving.
\end{IEEEbiography}

\begin{IEEEbiography}
[{\includegraphics[width=1in,height=1.25in,clip,keepaspectratio]{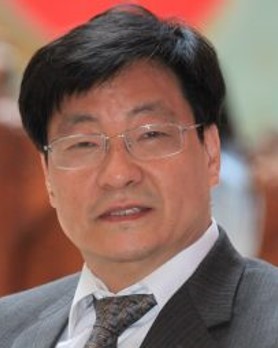}}]{Chengzhong Xu} (Fellow, IEEE) received the Ph.D. degree from The University of Hong Kong, in 1993. He is currently the chair professor of computer science and the dean with the Faculty of Science and Technology, University of Macau. Prior to this, he was with the faculty at Wayne State University, USA, and the Shenzhen Institutes of Advanced Technology, Chinese Academy of Sciences, China. He has published more than 400 papers and more than 100 patents. His research interests include cloud computing and data-driven intelligent applications. He was the Best Paper awardee or the Nominee of ICPP2005, HPCA2013, HPDC2013, Cluster2015, GPC2018, UIC2018, and AIMS2019. He also won the Best Paper award of SoCC2021. He was the Chair of the IEEE Technical Committee on Distributed Processing from 2015 to 2019.
\end{IEEEbiography}

\begin{IEEEbiography}
[{\includegraphics[width=1in,height=1.25in,clip,keepaspectratio]{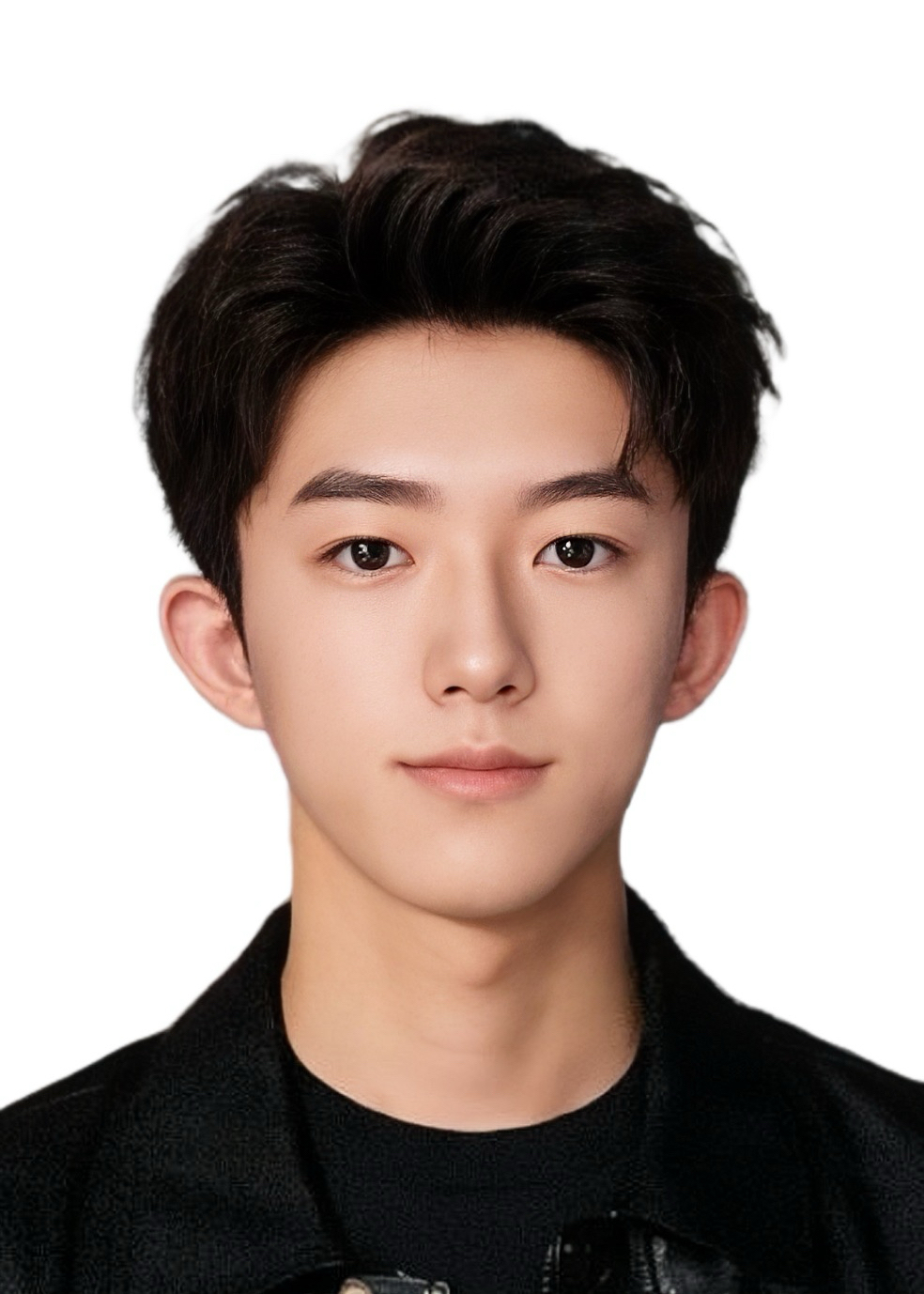}}] {Zhenning Li} (Member, IEEE) received his Ph.D. in Civil Engineering from the University of Hawaii at Manoa, Honolulu, Hawaii, USA, in 2019. Currently, he holds the position of Assistant Professor at the State Key Laboratory of Internet of Things for Smart City, as well as the Department of Computer and Information Science at the University of Macau, Macau. Over his academic career, he has published over 80 papers. His main areas of research focus on the intersection of connected autonomous vehicles and Big Data applications in urban transportation systems. He has been honored with several awards, including the TRB best young researcher award and the CICTP best paper award.
\end{IEEEbiography}

% You can push biographies down or up by placing
% a \vfill before or after them. The appropriate
% use of \vfill depends on what kind of text is
% on the last page and whether or not the columns
% are being equalized.

%\vfill

% Can be used to pull up biographies so that the bottom of the last one
% is flush with the other column.
%\enlargethispage{-5in}

% that's all folks
\end{document}